\newcolumntype{C}[1]{>{\centering\arraybackslash}m{#1}} 
\theoremstyle{plain}
\newtheorem{theorem}{Theorem}[section]
\newtheorem{proposition}[theorem]{Proposition}
\newtheorem{lemma}[theorem]{Lemma}
\theoremstyle{definition}
\newtheorem{definition}[theorem]{Definition}
\newtheorem{assumption}[theorem]{Assumption}
\theoremstyle{remark}
\icmltitlerunning{SKETCH: Semantic Key-Point Conditioning for Long-Horizon Vessel Trajectory Prediction}
\begin{document}

\twocolumn[
  \icmltitle{SKETCH: Semantic Key-Point Conditioning for Long-Horizon Vessel Trajectory Prediction}



  \icmlsetsymbol{equal}{*}

  \begin{icmlauthorlist}
    \icmlauthor{Linyong Gan}{equal,cuhkszsds}
    \icmlauthor{Zimo Li}{equal,cuhkszsds}
    \icmlauthor{Wenxin Xu}{equal,cuhkszsse}
    \icmlauthor{Xingjian Li}{equal,cuhkszsds}
    \icmlauthor{Jianhua Z. Huang}{cuhkszsds,cuhkszsai}
    \icmlauthor{Enmei Tu}{cosco}
    \icmlauthor{Shuhang Chen}{cosco}
  \end{icmlauthorlist}

  \icmlaffiliation{cuhkszsds}{School of Data Science, The Chinese University of Hong Kong, Shenzhen, China}
  \icmlaffiliation{cuhkszsse}{School of Science and Engineering, The Chinese University of Hong Kong, Shenzhen, China}
  \icmlaffiliation{cuhkszsai}{School of Artificial Intelligence, The Chinese University of Hong Kong, Shenzhen, China}
  \icmlaffiliation{cosco}{COSCO SHIPPING Advanced Technology Institute, Shanghai, China}

  \icmlcorrespondingauthor{Enmei Tu}{hellotem@hotmail.com}
  \icmlcorrespondingauthor{Shuhang Chen}{2480093@tongji.edu.cn}

  \icmlkeywords{Navigational Intent, Next Key Point, Semantic Conditioning, Long-Horizon Trajectory Prediction}

  \vskip 0.3in
]



\printAffiliationsAndNotice{\icmlEqualContribution}

\begin{abstract}
Accurate long-horizon vessel trajectory prediction remains challenging due to compounded uncertainty from complex navigation behaviors and environmental factors.
Existing methods often struggle to maintain global directional consistency, leading to drifting or implausible trajectories when extrapolated over long time horizons.
To address this issue, we propose a semantic-key-point-conditioned trajectory modeling framework, in which future trajectories are predicted by conditioning on a high-level Next Key Point (NKP) that captures navigational intent.
This formulation decomposes long-horizon prediction into global semantic decision-making and local motion modeling, effectively restricting the support of future trajectories to semantically feasible subsets. 
To efficiently estimate the NKP prior from historical observations, we adopt a pretrain-finetune strategy.
Extensive experiments on real-world AIS data demonstrate that the proposed method consistently outperforms state-of-the-art approaches, particularly for long travel durations, directional accuracy, and fine-grained trajectory prediction.
\end{abstract}

\section{Introduction}

Maritime transportation is the basis of global trade, carrying over 80\% of goods worldwide by volume \cite{kosowska2020network}. Accurate vessel trajectory prediction is therefore a fundamental capability for maritime intelligence, supporting collision avoidance \cite{papadimitrakis2021multi, zhang2022vessel}, port operation optimization \cite{peng2023deep}, search and rescue \cite{liu2024usvs}, and fuel-efficient voyage planning \cite{zhang2025aisfuser}. Meanwhile, the widespread use of the Automatic Identification System (AIS) has enabled data-driven approaches by providing large-scale, high-frequency vessel motion records \cite{hexeberg2017, murray2020, wang2023, yang2022}. However, the availability of large-scale data alone does not eliminate the intrinsic difficulty of long-horizon trajectory forecasting, where uncertainty compounds over time.

Building upon such data, existing approaches can be broadly categorized into non-learning and learning-based methods. Early non-learning approaches embed explicit mathematical models or statistical priors, including the Constant Velocity Model (CVM) \cite{xiao2020big}, the Nearly Constant Velocity (NCV) model \cite{ristic2008statistical}, Extended Kalman Filter (EKF) variants \cite{perera2012maritime, perera2010ocean}, stochastic processes such as the Ornstein--Uhlenbeck (OU) process \cite{millefiori2017modeling}, Gaussian Process (GP) models \cite{rong2019ship}, and knowledge-based particle filtering methods \cite{mazzarella2015knowledge}. Although computationally efficient and simple to implement \cite{chen2023tdv, xiao2020big}, these approaches often depend on simplified vessel-dynamics assumptions and neglect environmental effects \cite{kanazawa2021multiple}. Their robustness is therefore limited during maneuvers and in noisy scenarios \cite{chen2023tdv, gao2021novel}, making them less suitable for medium- and long-horizon trajectory prediction \cite{nguyen2024transformer}.

To overcome the limitations of purely statistical modeling, learning-based approaches have been widely explored for sequential trajectory prediction. Recurrent Neural Networks (RNNs) and Long Short-Term Memory (LSTM) networks improve model expressiveness but often struggle to capture long-range dependencies and multimodal future behaviors \cite{gao2021novel, chen2023tdv}. In particular, MP-LSTM predicts midpoints and last points, and interpolates intermediate trajectories, which limits their ability to model fine-grained kinematic variations and directional changes \cite{gao2021novel}. These limitations become increasingly pronounced as the prediction horizon grows.

More recently, Transformer-based models have shown promise due to their ability to model long-range dependencies via self-attention \cite{vaswani2017attention}. Approaches such as TrAISformer enhance medium-term trajectory prediction and more effectively capture multimodal uncertainty \cite{nguyen2024transformer}. Nevertheless, we observe that standard autoregressive Transformers still degrade severely in long-horizon settings. Notably, both TrAISformer \cite{nguyen2024transformer} and recent token-based GPT-style forecasters, such as TrackGPT \cite{stroh2024trackgpt}, convert continuous trajectory attributes into discrete tokens and learn embeddings over these discrete representations as the input to a Transformer backbone. Due to the discretization of continuous geographic space, token-based trajectory models are trained only on a sparse subset of spatial tokens. When deployed in previously unseen regions, the model must extrapolate beyond the empirical support of the training distribution, often leading to unstable or incoherent predictions. As uncertainty accumulates, predictions tend to collapse toward nearly horizontal trajectories. This observation suggests that increasing model capacity alone is insufficient for robust long-term trajectory forecasting.

One more limitation stems from the lack of explicit modeling of the global navigational intent. In real-world maritime navigation, vessels follow a hierarchical decision process: high-level route planning toward the NKPs, coupled with low-level continuous control of speed and heading~\cite{jmse13071246}. Most existing trajectory prediction models, however, operate solely at the local kinematic level. As a result, long-term navigational intent must be implicitly inferred from short-term motion patterns, which becomes increasingly unreliable over extended horizons.

Motivated by this observation, we recast long-horizon vessel trajectory prediction as a hierarchical forecasting problem that explicitly models global navigational intent. Our contributions are summarized as follows:

\begin{enumerate}
    \item We identify the lack of explicit global intent modeling as a fundamental limitation of existing long-horizon vessel trajectory prediction methods.
    
    \item We introduce the Next Key Point (NKP) as a semantic intent variable and condition it to trajectory prediction, separating global navigational decisions from local motion dynamics within a hierarchical framework.
    
    \item We propose an efficient training strategy for NKP-conditioned forecasting, allowing the model to generalize to open-set navigational targets rather than depending on a fixed closed set of ports.

    \item Experiments on large-scale AIS datasets show state-of-the-art performance, particularly in long-horizon prediction.
\end{enumerate}

\section{Related Works}

\subsection{Decoder-only Transformer Trajectory Predictor}
Decoder-only Transformers, such as GPT-2 \cite{radford2019language, vaswani2017attention}, consist of multiple blocks of masked multi-head self-attention and position-wise feed-forward layers, with residual connections, layer normalization, and dropout. They generate outputs autoregressively, predicting the next token conditioned on previous tokens:
\begin{equation}
p(x_{1:T}) = \prod_{t=1}^{T} p(x_t \mid x_{<t}).
\end{equation}
Originally designed for natural language, decoder-only Transformers have also been successfully applied to sequential data in computer vision \cite{ramesh2021zeroshottexttoimagegeneration}, audio generation \cite{wang2023neuralcodeclanguagemodels}, and time series forecasting \cite{ansari2024chronoslearninglanguagetime}.  
In our work, vessel trajectories are treated as sequences of tokens $(\text{latitude, longitude, SOG, COG})$, enabling autoregressive modeling of future positions within a vessel trajectory.

\subsection{Retrieval-Augmented Verification}
Retrieval-Augmented Detection \cite{kang2024retrieval} and Verification \cite{wang2025audio} reformulate prediction as similarity-based verification using reference samples from an external database. By matching query embeddings with stored embeddings, labels can be inferred beyond a fixed closed set, enabling open-set recognition and interpretable prediction.

We adapt this framework to NKP prediction by treating the Next Key Point as a semantic intent variable and retrieving stored trajectories whose hidden representations are most similar to the query:
\vspace{-15pt}

\begin{equation}
\hat{h}_k = \arg\max_{h \in H} \text{sim}_{\cos}(h_k, h)
= \arg\max_{h \in H} \frac{h_k \cdot h}{\|h_k\|\cdot\|h\|}.
\label{L_cossim}
\end{equation}

\subsection{Contrastive Learning for Embedding Consistency}
Contrastive learning \cite{hadsell2006dimensionality} is used to ensure that trajectories leading to the same NKP have similar embeddings while different ones are pushed apart. For hidden states $H_1, H_2 \in \mathbb{R}^{B\times T\times H}$, average pooling and L2 normalization yield sequence-level embeddings $\hat{h}_1, \hat{h}_2$, and the cosine similarity $D_i = \text{sim}_{\cos}(\hat{h}_1, \hat{h}_2)$ is computed. The loss function is:
\begin{figure*}[h]
    \centering
    \includegraphics[width=\linewidth]{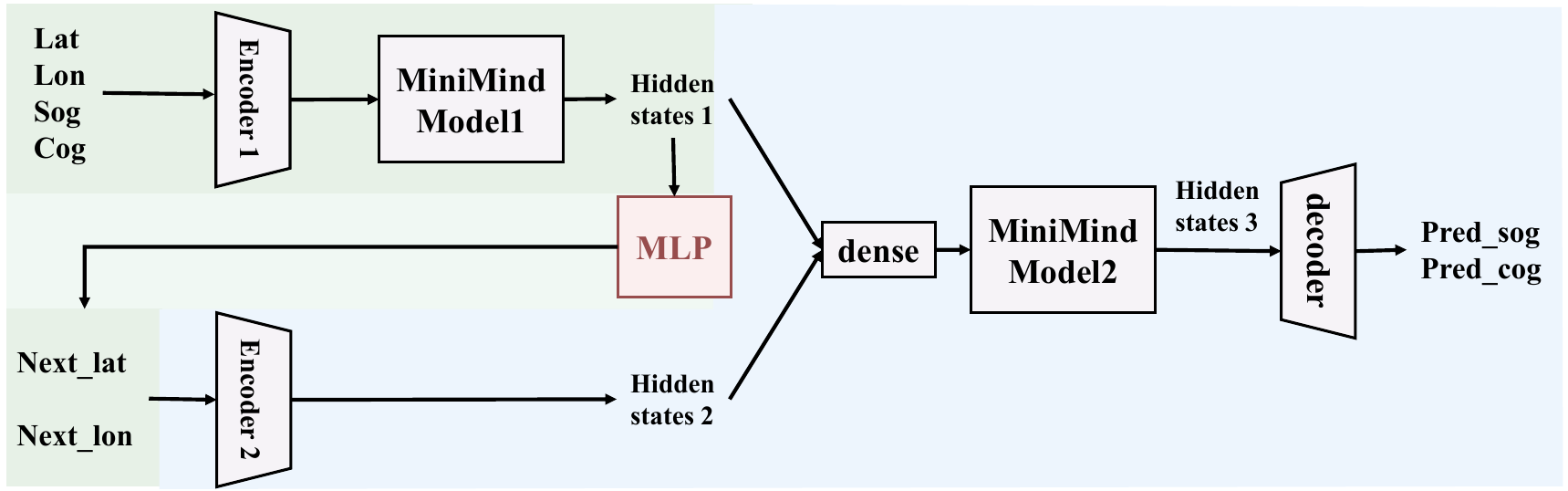}
    \caption{\emph{Overall Architecture.} The inputs of AIS data go through Encoder 1 and MiniMind model 1 to be transformed into hidden state 1. The hidden state 1 will be sent into an MLP to predict the Next Key Point information. These coordinates will be sent to the encoder 2 to derive the hidden state 2. Then, hidden states 1 and 2 will be concatenated and passed through a dense layer, MiniMind model 2, and a decoder to obtain the predicted SOG and COG.}
    \label{fig: overall architecture}
\end{figure*}
\begin{equation}
\mathcal{L}_{\text{TCL}} = \frac{1}{B}\sum_{i=1}^{B} \big[ (1 - y_i) D_i^2 + y_i \max(0, M - D_i)^2 \big],
\label{L_TCL}
\end{equation}
where $y_i=1$ if the pair shares the same NKP label, otherwise $y_i=0$, and $M$ is a margin.

\section{Methodology}

\subsection{Overview}
To model vessel trajectories, we formulate the problem probabilistically. Let $X$ denote the observed historical trajectory, $Z$ the Next Key Point (NKP), and $Y$ the future trajectory. Our goal is to model the conditional distribution $p(Y \mid X)$, which we factorize as
\begin{equation}
P(Y \mid X) = \sum_{Z} P(Y \mid X, Z)\, P(Z \mid X),
\end{equation}
where $P(Z \mid X)$ captures the uncertainty of the Next Key Point, and $P(Y \mid X, Z)$ models the future trajectory given a specific NKP.

Conditioning on $Z$ refines future trajectory prediction by introducing high-level semantic constraints that restrict the support of admissible trajectories, i.e.,
\begin{equation}
\mathrm{supp}\big(P(Y \mid X, Z=z)\big)
\subsetneq
\mathrm{supp}\big(P(Y \mid X)\big).
\end{equation}
This refinement should be understood as a reduction of predictive uncertainty rather than a pointwise increase of posterior likelihoods (see Appendix~\ref{app:information_gain}).

NKP represents a semantic navigational constraint that defines a subset of feasible future trajectories rather than a specific spatial anchor. This factorization enables the model to disentangle high-level navigation decisions from low-level dynamics.

In addition, we formulate NKP inference in an open-set manner, allowing the model to generalize to newly emerging navigational targets. The concrete inference mechanism is detailed in \cref{section:NKP Prediction}, and the overall architecture is illustrated in Fig.~\ref{fig: overall architecture}.

\subsection{Semantic Next-Key-Point Modeling}
\label{section:NKP Prediction}
\begin{definition}[Semantic Next Key Point (NKP)]
\label{def:semantic_nkp}
A Semantic Next Key Point (NKP) is an extendable predefined explicit coordinate point that represents a semantically meaningful maritime node which a vessel is likely to approach or pass in the near future, inferred from its observed trajectory history. Unlike an arbitrary geometric waypoint, an NKP belongs to a finite set of practically meaningful navigational anchors, such as port regions, straits, turning areas, or major shipping-lane junctions, and its semantic meaning lies in indicating the next important maritime node along the vessel's future movement.

NKPs are not the terminal points of trajectories, nor do they necessarily correspond to the final destination ports. Instead, they serve as semantically grounded segmentation points that decompose an extremely long trajectory into several acceptable shorter segments, each of which can be regarded as locally governed by a Markovian transition toward the next NKP, thereby enabling hierarchical long-horizon prediction.
\end{definition}

\noindent\textbf{Remark.}
We use the term \emph{semantic} to emphasize that an NKP represents an intent-level equivalence class of feasible futures rather than a precise geometric waypoint, although in practice we obtain NKP supervision from spatial annotations (e.g., port/strait intersections) for training and evaluation.

To incorporate Next Key Point (NKP) information, we explicitly estimate the conditional distribution $p(Z \mid X)$, where $X$ denotes the observed trajectory history and $Z$ the corresponding NKP. Direct NKP classification scales poorly with dynamic maritime landscapes and cannot handle unseen key points, motivating a retrieval-based verification formulation. We adopt a contrastive verification-based formulation inspired by retrieval-augmented methods \cite{kang2024retrieval}, where destination inference is augmented by retrieving and verifying semantically similar trajectories from a reference database, which is well-suited for modeling semantic consistency among trajectories leading to the same destination.

As illustrated in Fig.~\ref{fig: stage 2}, we construct training pairs consisting of trajectories associated with either the same or different NKPs. Each trajectory is encoded into a latent representation, and the cosine similarity between two representations is used to assess whether they correspond to the same NKP. Following the contrastive learning paradigm \cite{hadsell2006dimensionality}, we optimize the loss in Eq.~\eqref{L_TCL}, which encourages trajectories leading to the same NKP to share similar representations while separating those associated with different NKPs.

To improve efficiency, we freeze Encoder~1 and MiniMind Model~1 after pretraining and only fine-tune a lightweight multi-layer perceptron (MLP) for NKP verification. In this way, the learned representation acts as a compact semantic embedding, capturing high-level navigational intent. In practice, we approximate marginalization over $Z$ using a Maximum A Posteriori (MAP) estimate, which performs robustly in our experiments. When $P(Z \mid X)$ is sharply peaked, marginalization can be well approximated by conditioning on the MAP NKP.

During inference, an unknown trajectory is compared against a database of reference trajectories. The reference database contains only training trajectories for the private dataset. The largest similarity of each reference trajectory is cast for its prediction, as summarized in Algorithm~\ref{alg:nkp}. This procedure can be interpreted as a nonparametric approximation to $p(Z \mid X)$ via nearest-neighbor evidence aggregation. The overall training and inference pipeline is shown in Fig.~\ref{fig: stage 2}.

\begin{algorithm}[tb]
\caption{Next Key Point (NKP) Prediction}
\label{alg:nkp}
\begin{algorithmic}[1]
\REQUIRE Observed trajectory $X$
\REQUIRE Reference database $\mathcal{D} = \{(X_i, z_i)\}_{i=1}^N$
\ENSURE Predicted Next Key Point $\hat{z}$

\STATE Compute embedding $h = f(X)$
\FOR{$(X_i, z_i) \in \mathcal{D}$}
    \STATE Compute embedding $h_i = f(X_i)$
    \STATE $c(z_i)=\mathrm{sim}(h, h_i)$
\ENDFOR
\STATE $\hat{z} \leftarrow \arg\max_{z} c(z)$
\STATE \textbf{return} $\hat{z}$
\end{algorithmic}
\end{algorithm}

\begin{figure*}[h]
    \centering
    \includegraphics[width=0.9\linewidth]{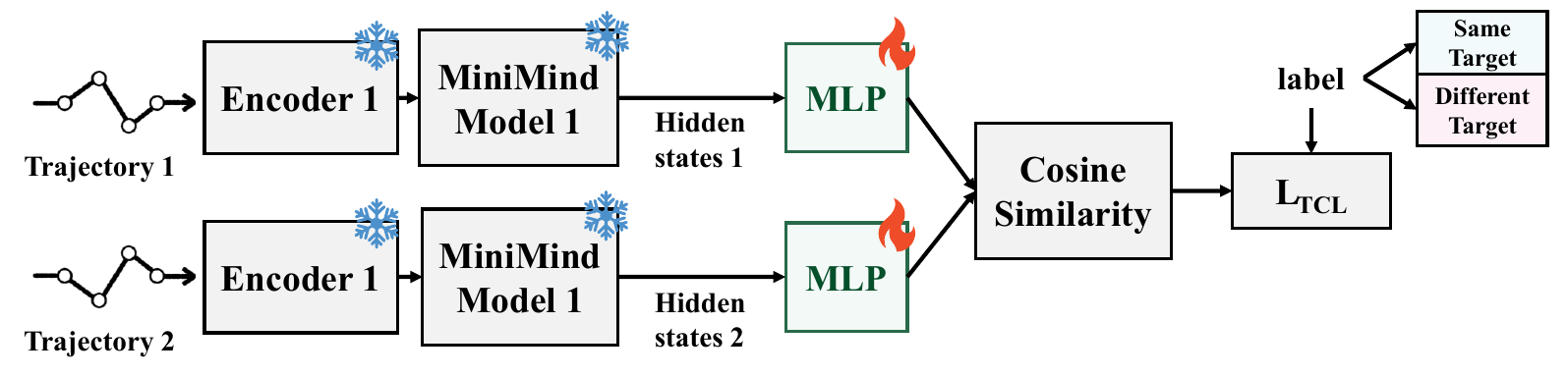}
    \caption{\emph{Key-Point Prediction Training Paradigm.} Contrastive Learning is used to derive the hidden states of each trajectory, thereby decoupling the NKP information. To be more efficient, the two blocks trained previously are frozen and reused for fine-tuning.}
    \label{fig: stage 2}
\end{figure*}

\subsection{Multi-stage Learning for NKP-Trajectory Modeling}
This learning procedure provides a structured approximation to the joint distribution $p(Y, Z \mid X)$. As illustrated in Fig.~\ref{fig: overall architecture}, we adopt a three-stage framework that provides a structured approximation to the joint distribution $p(Y, Z \mid X)$, where $X$ denotes the observed trajectory history, $Z$ the Next Key Point (NKP), and $Y$ the future trajectory. The first two stages focus on learning the conditional components of the model, while the final stage performs integrated inference.

\subsubsection{Conditional Trajectory Modeling (blue)} 
In the first stage, we train the trajectory prediction module to model the conditional distribution $p(Y \mid X, Z)$, where the ground-truth NKP is provided as oracle conditioning. This design decouples high-level navigational intent from low-level vessel dynamics, allowing the model to focus on long-horizon motion patterns under a fixed semantic constraint.

Although the Next Key Point $Z$ is unknown at inference time, we first learn $p(Y \mid X, Z)$ under oracle NKP supervision. This decouples high-level navigational intent from low-level motion dynamics, yielding a well-conditioned learning problem and preventing gradient interference when subsequently estimating $p(Z \mid X)$.
To capture the sequential structure of vessel motion, the future trajectory distribution is
factorized autoregressively as
\begin{equation}
p(Y \mid X, Z) = \prod_{t=1}^{T} p(y_t \mid X, Z, y_{<t}),
\end{equation}
and optimized using maximum likelihood with teacher forcing.

While autoregressive long-horizon prediction is expressive, it is susceptible to error accumulation
due to exposure bias, arising from the mismatch between training-time conditioning on ground-truth
prefixes and inference-time conditioning on model predictions.
To mitigate this issue, we adopt a step-by-step training scheme,
which stabilizes learning and reduces cumulative error propagation.
From a reinforcement learning perspective, this procedure is equivalent to behavior cloning,
where the model learns to imitate expert actions, i.e., SOG and COG of the vessels, conditioned on historical states and NKP information. These two training schemes were conducted alternatively. The loss functions for these two approaches are:
\begin{equation}
    \mathcal{L}_{GPT2} = \frac{1}{T}\sum^T_{t=1}(\hat{vel}(t)-vel(t))^2
\end{equation}
\begin{equation}
    \mathcal{L}_{BC}=\frac{1}{T}\sum^T_{t=1}(\hat{coord}(t)-coord(t))^2
\end{equation}
${vel}$ refers to the SOG component over latitude and longitude, while ${coord}$ refers to the latitude and longitude. The latter loss function can be interpreted as minimizing a one-step imitation loss with respect to the expert policy induced by AIS trajectories. This model can be interpreted as the Maritime Trajectory Large Model for downstream tasks related to trajectory understanding. 

\subsubsection{NKP Modeling (green)} 
In the second stage, we focus on estimating the NKP posterior $p(Z \mid X)$.
The trajectory prediction backbone trained in Stage~1 is frozen,
and only the NKP prediction feedforward module is optimized using the contrastive loss in Eq.~\eqref{L_TCL}.
Freezing the backbone stabilizes representation learning and prevents interference between
low-level motion dynamics and high-level semantic intent,
ensuring that the learned NKP embeddings capture coarse navigational information rather than
fine-grained kinematic details.
With that, NKP can be derived as a byproduct. 
\subsubsection{Integrated Inference} 
During inference, the trained NKP module is first used to estimate the NKP $\hat{Z}$
from the observed trajectory history $X$.
The future trajectory is then generated by conditioning the trajectory predictor on $(X, \hat{Z})$.
This procedure provides a practical approximation to marginalizing over NKPs in the conditional
distribution $p(Y \mid X)$, enabling coherent long-horizon trajectory prediction under NKP uncertainty.

\subsection{Model Architecture and Physical Representation}

To implement this factorization, we build our model upon a decoder-only Transformer \cite{vaswani2017attention}, named MiniMind\footnote{\url{https://github.com/jingyaogong/minimind}}, due to its efficiency and strong temporal modeling capabilities. We introduce linear layers as input and output projections in the Transformer to process continuous maritime data, mapping latitude and longitude to continuous embeddings and SOG/COG to scaled action components. Specifically, we normalize the longitude and latitude to $[-1,1]$ and convert COG and SOG into latitude/longitude velocity components using a scaling factor of $1/25$, which can be interpreted as the combined effect of crew decisions and environmental dynamics. However, to visualize the motion, the next latitude and longitude should be calculated, given the predicted SOG and COG. 

\begin{assumption}[Local Linear Motion]
\label{ass:linear_motion}
Over a short time interval $T$, the vessel is assumed to move with constant
speed over ground (SOG) $v$ and course over ground (COG) $\theta$.
\end{assumption}

\begin{proposition}[SOG/COG Coordinate Update]
\label{prop:sog_cog_update}
Under \cref{ass:linear_motion}, given current coordinates $(lat_0, lon_0)$,
the next position $(lat_1, lon_1)$ is given by
\begin{align}
lat_1 &= lat_0 + \frac{v \cos \theta}{R} T, \\
lon_1 &= lon_0 + \tan \theta
\ln \frac{\lvert \sec(lat_1) + \tan(lat_1) \rvert}
         {\lvert \sec(lat_0) + \tan(lat_0) \rvert},
\end{align}
where $R$ denotes the Earth’s radius.
\end{proposition}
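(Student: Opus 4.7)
The plan is to derive both coordinate updates from the spherical velocity decomposition implied by the definitions of SOG and COG, treating the path over $[0,T]$ as a rhumb line (loxodrome) of constant bearing $\theta$ on the Earth's surface, which is the natural kinematic interpretation of \cref{ass:linear_motion}. The key geometric ingredient is that at latitude $\phi$ the surface line element decomposes as $ds^2 = R^2\, d\phi^2 + R^2 \cos^2\phi \, d\lambda^2$, with $\lambda$ denoting longitude, so the northward arc element is $R\, d\phi$ and the eastward arc element is $R\cos\phi\, d\lambda$.

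First I would establish the latitude update. Since COG is measured from true north, the northward component of ground velocity equals $v\cos\theta$, which is constant by hypothesis. The corresponding northward arc length over $[0,T]$ is $vT\cos\theta$, and dividing by $R$ converts arc length to angular displacement in latitude, giving $lat_1 - lat_0 = vT\cos\theta/R$ immediately.

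Next, for the longitude update I would use the defining property of a rhumb line: the bearing $\theta$ equals the angle between the ground-velocity vector and local north, so at every point along the trajectory the ratio of eastward to northward arc elements is $\tan\theta$. Combined with the metric above, this yields the separable ODE $d\lambda/d\phi = \sec\phi \, \tan\theta$. Integrating from $lat_0$ to $lat_1$ with $\theta$ held constant reduces to the standard antiderivative
\begin{equation*}
\int \sec\phi\, d\phi = \ln\lvert \sec\phi + \tan\phi \rvert + C,
\end{equation*}
which produces the stated expression for $lon_1 - lon_0$.

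The main obstacle is more conceptual than computational: the proposition implicitly commits to the loxodrome interpretation of constant COG rather than the great-circle geodesic that a literal reading of \emph{linear motion} might suggest. I would address this by making the loxodrome assumption explicit at the start of the proof and observing that the two interpretations agree to first order in $T$, so the closed form is consistent with the \emph{local} qualifier in \cref{ass:linear_motion}. Degenerate cases such as $\theta = \pm \pi/2$ (pure east–west motion, where $lat_1 = lat_0$ and the logarithm vanishes, requiring a separate elementary check on the parallel) and near-polar latitudes (where $\sec\phi$ blows up) warrant a short remark but do not affect the generic derivation.
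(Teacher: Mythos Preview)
Your proposal is correct and follows essentially the same approach as the paper: decompose the ground velocity into north and east components via $\theta$, obtain the latitude update immediately from the constant northward speed, and reduce the longitude update to the integral $\int\sec\phi\,d\phi$. The only cosmetic difference is that the paper parametrizes by time (writing $\omega_{lon}(t)=v\sin\theta/(R\cos(lat_0+\tfrac{v\cos\theta}{R}t))$ and then changing variables to latitude), whereas you go directly to the loxodrome ODE $d\lambda/d\phi=\sec\phi\,\tan\theta$; your explicit identification of the rhumb-line interpretation and the remarks on the $\theta=\pm\pi/2$ and near-polar cases are additions not present in the paper's derivation.
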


This formulation operates in a locally Euclidean space and avoids repeated spherical projections, thereby improving numerical stability and preventing systematic error amplification during long-horizon autoregressive rollout. The derivation follows from integrating the velocity field under a spherical-Earth approximation and is presented in \cref{derivation}, along with a numerical stability analysis in \cref{app:stability}. 

To empirically assess the numerical consistency of the proposed coordinate update, we perform a one-step displacement check on real AIS trajectories. For each trajectory segment of length $288$, the next-step position is computed from the current SOG/COG and compared against the ground-truth next coordinate. Across the evaluated trajectories, the resulting mean squared error is on the order of $10^{-9}$, indicating numerical consistency up to machine precision.

\section{Evaluation Metrics}

\paragraph{Mean Squared Error of Position (MSEP).}
We evaluate point-level trajectory accuracy using the Mean Squared Error of Position (MSEP), defined as
\begin{equation}
\mathrm{MSEP} = \frac{1}{T}
\sum_{t=1}^{T}
\left\| P^{\mathrm{pred}}(t) - P^{\mathrm{true}}(t) \right\|_2^2 ,
\label{eq:msep}
\end{equation}
where $T$ denotes the prediction horizon, and
$P^{\mathrm{pred}}(t)$ and $P^{\mathrm{true}}(t)$ represent the predicted and ground-truth positions at time step $t$, respectively.

MSEP measures the \textbf{point-wise positional discrepancy} between predicted and ground-truth trajectories and thus reflects local prediction accuracy. A smaller MSEP indicates more accurate trajectory predictions at the coordinate level.

\paragraph{Mean Squared Error of Curvature (MSEC).}
To evaluate trajectory smoothness rather than geometric curvature itself, we measure discrepancies in curvature profiles between predicted trajectories and the ground truth.
We define the curvature at each trajectory point as
\begin{equation}
\kappa_i =
\begin{cases}
0, & i \in \{0, n-1\}, \\
\frac{\Delta \theta_i}{\bar{d}_i}, & \text{otherwise},
\end{cases}
\label{eq:kappa}
\end{equation}
where $\Delta \theta_i$ denotes the change in heading angle at point $i$, and $\bar{d}_i$ is the average arc length of the adjacent segments.
To reduce local noise, a smoothed curvature is computed using a three-point moving average:
\begin{equation}
\kappa_i^{\mathrm{smooth}} = \frac{\kappa_{i-t} + \kappa_i + \kappa_{i+t}}{3}.
\label{eq:kappa_smooth}
\end{equation}
Based on the smoothed curvature, we define the Mean Squared Error of Curvature (MSEC) between the predicted trajectory and the ground truth as
\begin{equation}
\mathrm{MSEC} = \frac{1}{n} \sum_{i=0}^{n-1}
\left( \kappa_i^{\mathrm{pred}} - \kappa_i^{\mathrm{true}} \right)^2 .
\label{eq:msec}
\end{equation}
The MSEC evaluates trajectory quality at a microscopic level by measuring point-wise curvature discrepancies and thus reflects the \textbf{smoothness} of the predicted trajectory. 
The curvature definition follows the standard discrete approximation
$\kappa \approx d\theta / ds$ commonly used in trajectory analysis and motion
planning~\cite{lavalle2006planning}.

\paragraph{Mean Fr\'echet Distance (MFD).}
The Mean Fr\'echet Distance (MFD) \cite{alt1995computing} is defined as
\begin{equation}
\mathrm{MFD} = \frac{1}{B} \sum_{k=1}^{B} F(A_k, B_k),
\end{equation}
where $F(\cdot,\cdot)$ denotes the discrete Fr\'echet distance between a predicted trajectory and its ground truth. This metric evaluates \textbf{curve-level} performance. 
\paragraph{Evaluation Metrics and Dataset Overview.}
To comprehensively evaluate trajectory prediction performance at different granularities, we adopt three complementary metrics: Mean Squared Error of Position (MSEP), Mean Squared Error of Curvature (MSEC), Mean Fr\'echet Distance (MFD) and inference duration.
Specifically, MSEP measures point-wise positional accuracy and reflects local coordinate-level errors.
MSEC evaluates discrepancies in curvature profiles and characterizes the smoothness and local geometric consistency of predicted trajectories.
In contrast, MFD captures curve-level similarity by assessing the global geometric alignment between predicted and ground-truth trajectories.
The inference durations, in seconds, evaluate the model's efficiency and include the time taken on the same test datasets at the same batch sizes. 
Together, these metrics provide a multi-scale evaluation framework that jointly accounts for local accuracy, trajectory smoothness, global shape consistency, and time efficiency.
More datasets and implementation details were covered in Appendix \cref{Dataset and Data Preprocessing} and  \cref{implementation details}.

\begin{table*}[t]
\centering
\small
\setlength{\tabcolsep}{2pt}
\renewcommand{\arraystretch}{0.9}
\caption{Qualitative comparison of MP-LSTM, TrAISformer, and our model.
They are trained on the same training split with the same procedure, and
evaluated on the same test split. Blue, green, red, orange, and purple trajectories refer to input, ground truth, our prediction, prediction from MP-LSTM \cite{gao2021novel}, and prediction from TrAISformer \cite{nguyen2024transformer}, respectively. Additional qualitative examples are provided in Appendix~\ref{app:qualitative}.}
\label{tab:qualitative comparison}
\begin{tabular}{*{4}{C{0.245\textwidth}}}
\toprule 
\includegraphics[width=\linewidth]{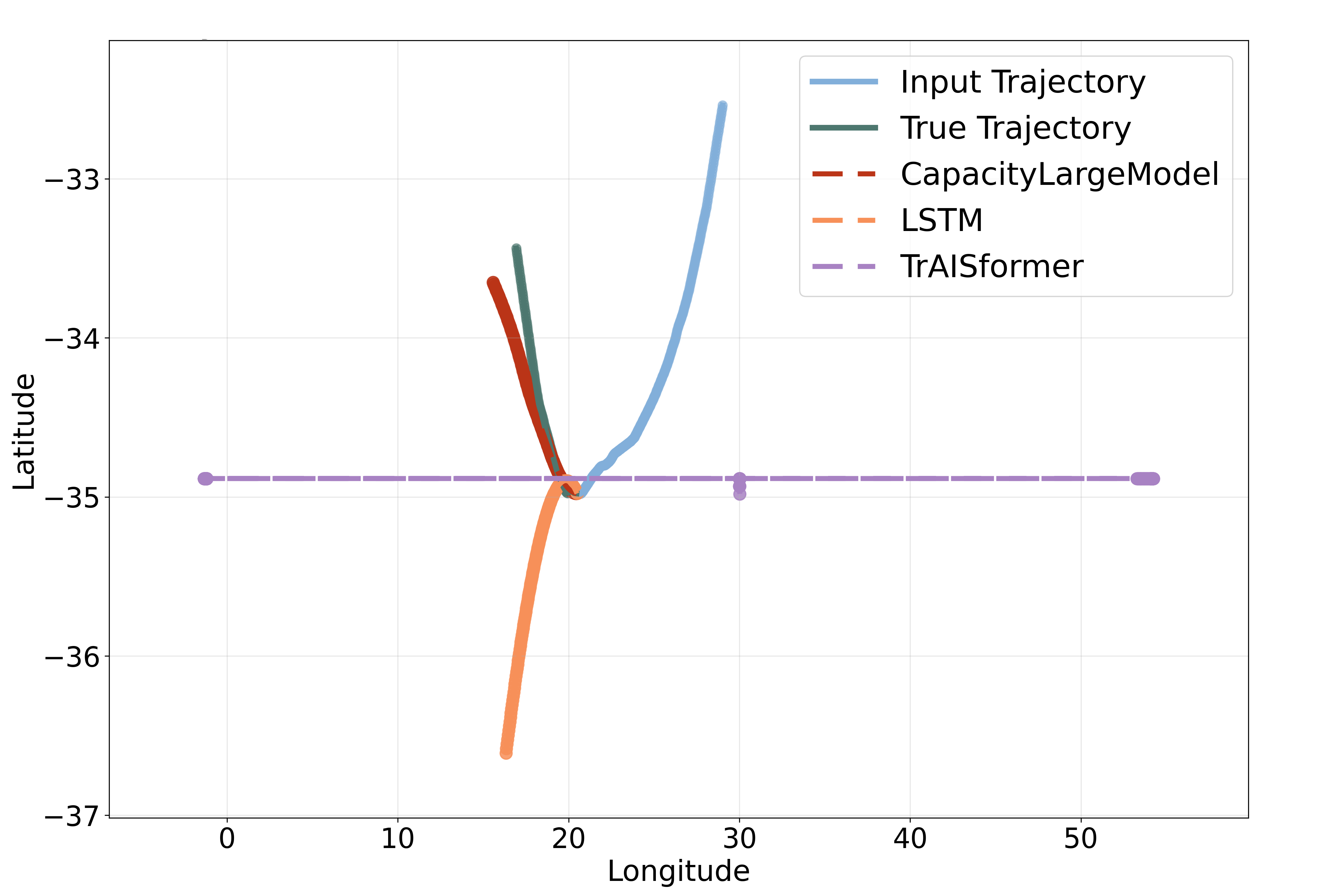} &
\includegraphics[width=\linewidth]{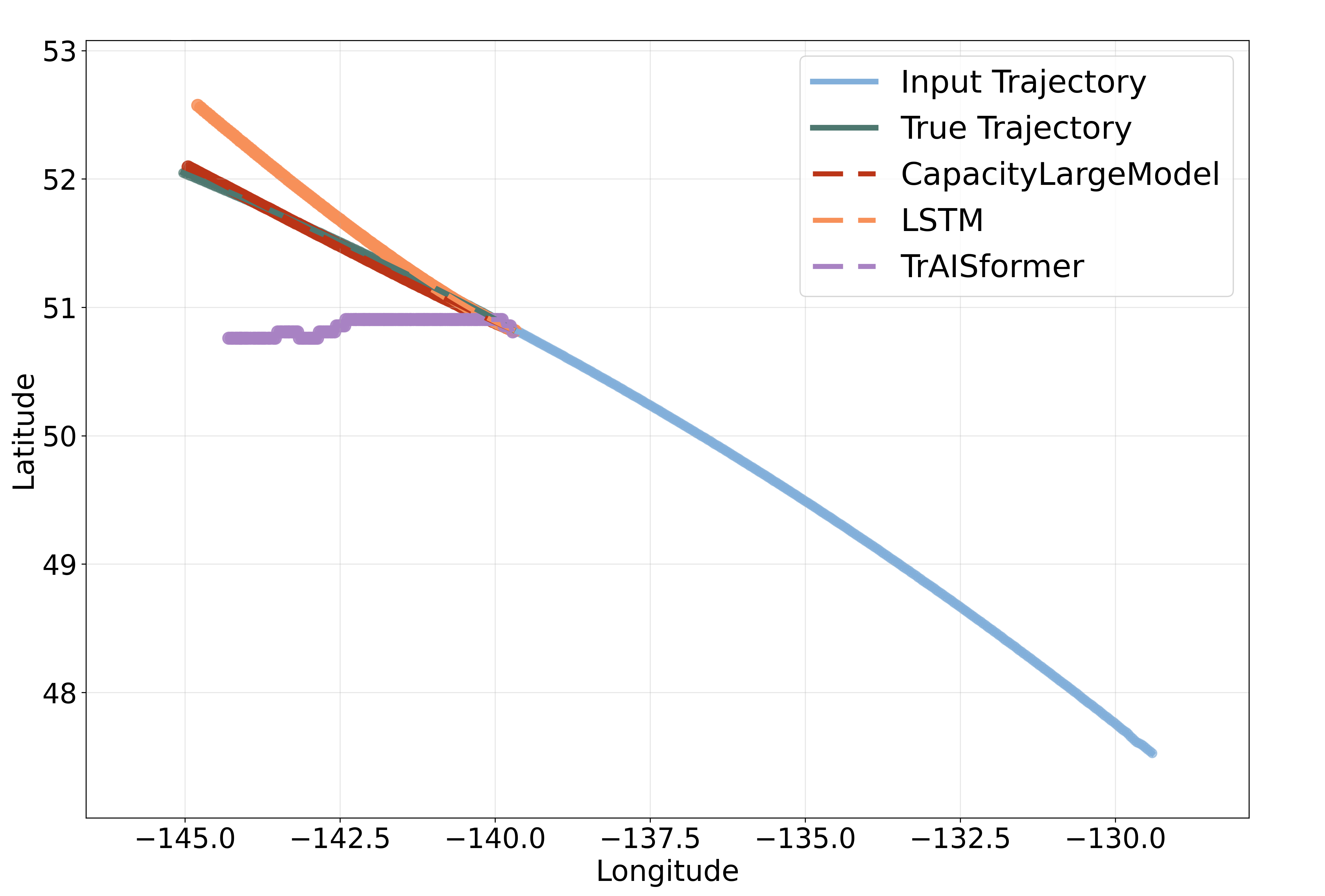} &
\includegraphics[width=\linewidth]{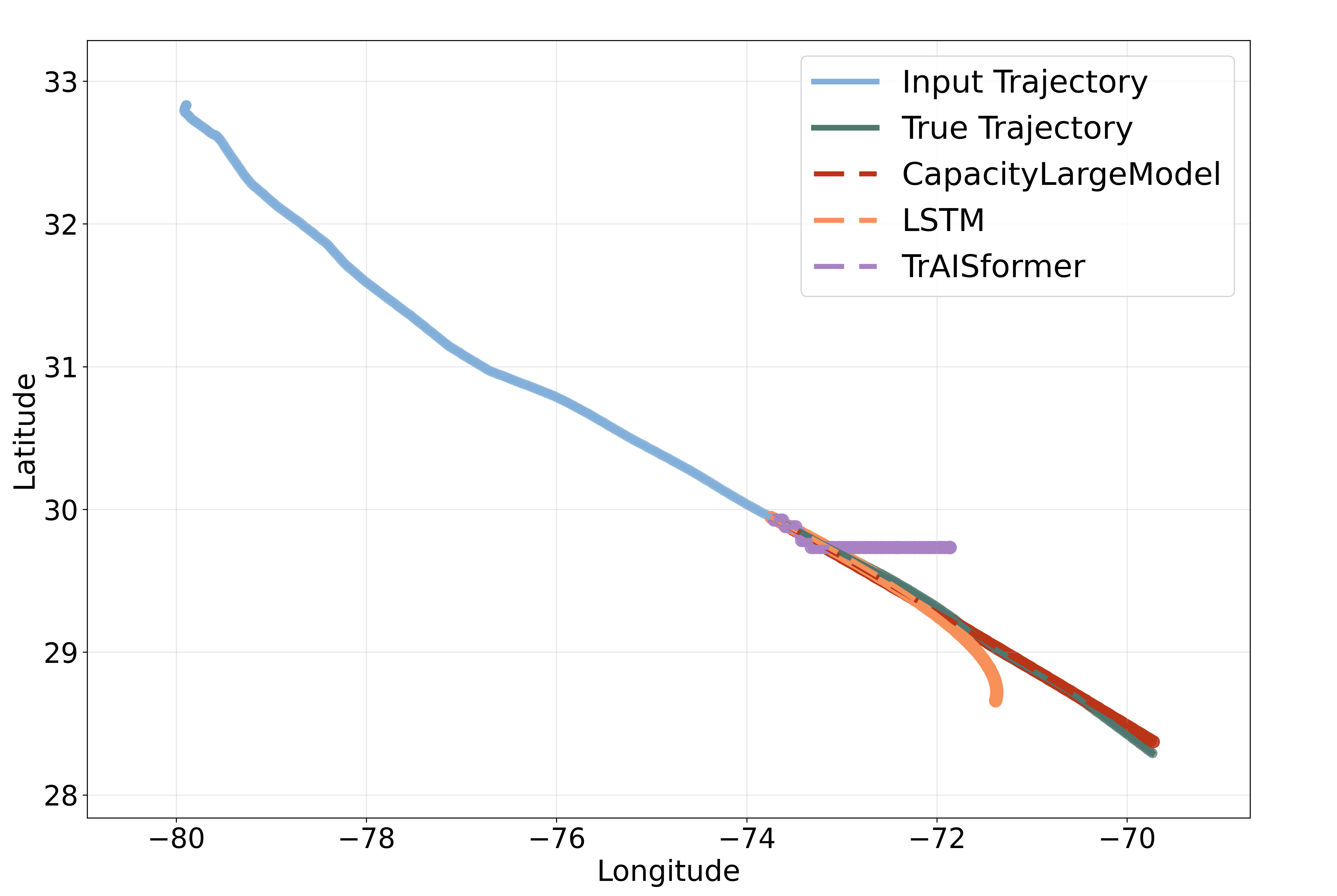} &
\includegraphics[width=\linewidth]{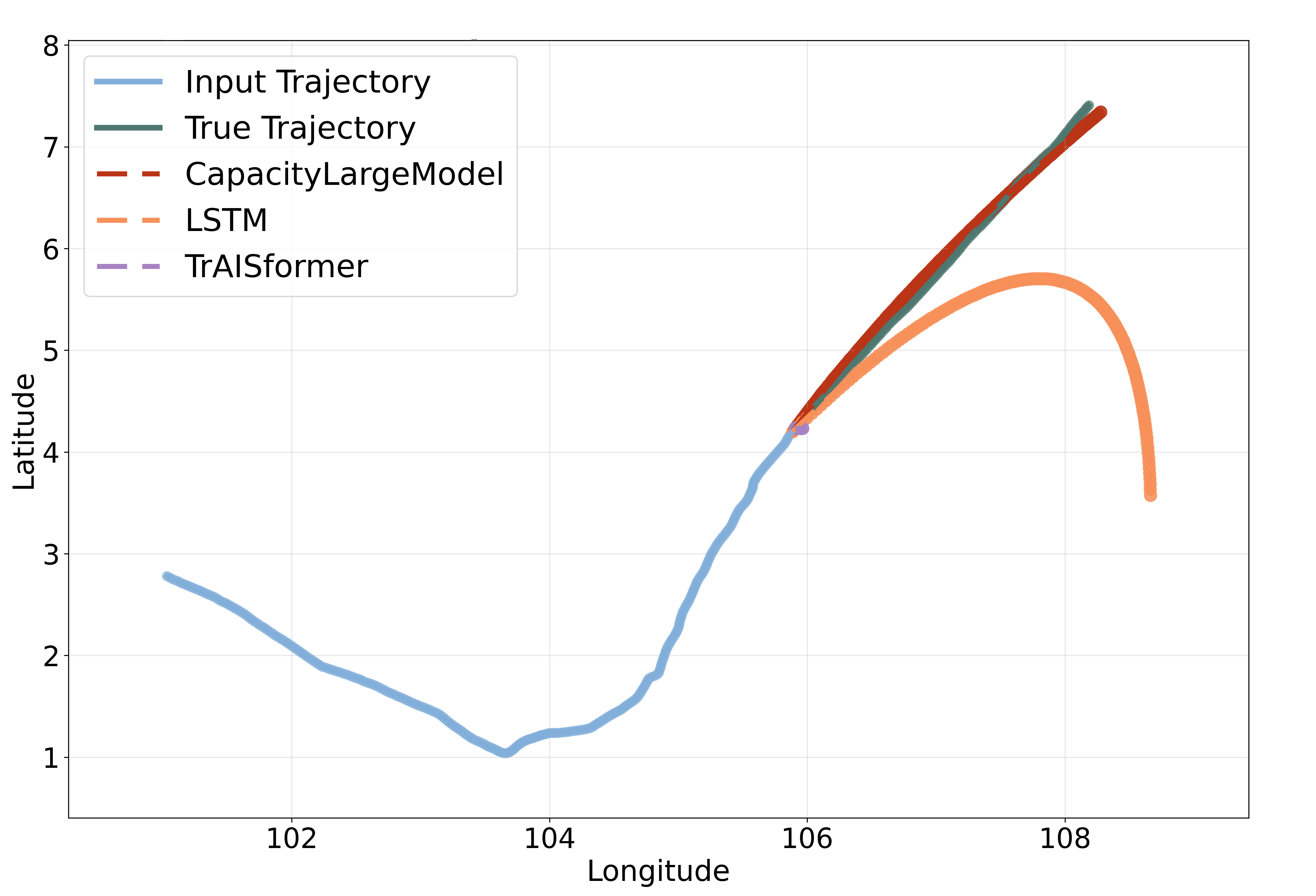} \\
\bottomrule 
\end{tabular}
\end{table*}

\section{Results}

\subsection{Metrics on Trajectory Prediction}

Table~\ref{tab: stage 3 compared with others} reports a quantitative comparison between our method and several representative baselines, evaluated using three complementary metrics: MSEP, MSEC, and MFD.\footnote{Implementation is available at \url{https://github.com/LinyongGAN/SKETCH}}

Overall, our method achieves the best performance across all reported metrics. Specifically, our model attains the lowest MSEP, indicating more accurate point-wise predictions over the entire time horizon. At the curve level, our approach also achieves the smallest MFD, reflecting closer geometric and curvature-level alignment with the ground-truth trajectories.

\begin{table}[h]
\caption{Quantitative comparison of trajectory prediction performance between our method and other baselines, evaluated using MSEP, MSEC, and MFD on the whole test dataset. }
\label{tab: stage 3 compared with others}
\begin{center}
\begin{small}
\begin{sc}
\begin{tabular}{lcccc}
\toprule
metrics & MSEP & MSEC & MFD \\
\midrule
our model & \textbf{4.40e-5} & \textbf{1.24e-6} & \textbf{0.63} \\
MP-LSTM & 8.47e-4 & 1.6e-5 & 4.35 \\
TrAISformer & 0.35 & 0.015 & 12.95 \\
DiffuTraj & 0.0011 & 1.6e-4 & 3.33 \\
AISFuser & 0.25& 0.0022 & 20.51 \\
\bottomrule
\end{tabular}
\end{sc}
\end{small}
\end{center}
\vskip -0.1in
\end{table}

\subsection{Generalization to Public AIS Dataset}
We further evaluate the generalization capability of our method on the public AIS dataset~\cite{xie2025ais}, which is collected from a different source and exhibits a distribution distinct from our private dataset.
\begin{table}[h]
\caption{Metrics for trajectory prediction on the public AIS\_Dataset \cite{xie2025ais}. All the metrics are the same as Table~\ref{tab: stage 3 compared with others} with a total of 560 samples. }
\label{tab: stage 3 compared with others public dataset}
\begin{center}
\begin{small}
\begin{sc}
\begin{tabular}{lcccc}
\toprule
metrics & MSEP & MSEC & MFD \\
\midrule
our model & \textbf{2.81e-4} & \textbf{1.37e-5} & \textbf{0.97} \\
MP-LSTM & 3.83E-4 & 2.6E-5 & 2.77 \\
TrAISformer & 2.04 & 0.038 & 49.50 \\
DiffuTraj & 7.9E-4 & 2.3E-4 & 2.91 \\
AISFuser & 0.11 & 0.024 & 26.44 \\
\bottomrule
\end{tabular}
\end{sc}
\end{small}
\end{center}
\vskip -0.1in
\end{table}

None of the evaluated models is trained or fine-tuned on this dataset, and all results therefore reflect true out-of-domain generalization performance. As shown in Table~\ref{tab: stage 3 compared with others public dataset}, our method achieves the best performance across all reported metrics, especially in the MFDs, which were considered the most critical metric.
In contrast, TrAISformer exhibits severe performance degradation on this dataset, as its reliance on embedding layers prevents it from generalizing to unseen latitude-longitude trajectories.

\subsection{Next Key Point Prediction Accuracy}
This section evaluates the effectiveness of our Next Key Point (NKP) prediction models. We compare three variants: \textit{pretrain-c}, the MiniMind model trained from scratch; \textit{sft-c}, a closed-set probabilistic fine-tuned MLP; and \textit{sft-o-s2}, an open-set supervised fine-tuned model capable of extending to unseen key points derived by contrastive learning.
\begin{table}[h]
\begin{center}
\caption{Next Key Point (NKP) prediction accuracy. Tested trajectories include NKPs absent from the sft-o-s2 database.}
\label{tab: stage2}
\begin{small}
\begin{sc}
\begin{tabular}{lc}
\toprule
Method & Accuracy \\
\midrule
sft-o-s2 & 95.46\% \\
sft-c & 92.32\% \\
pretrain-c & \textbf{98.98\%} \\
Random Forest\cite{zhang2020ais} & 76.54\% \\
\bottomrule
\end{tabular}
\end{sc}
\end{small}
\end{center}
\end{table}

As shown in Table~\ref{tab: stage2}, all our proposed models significantly outperform the Random Forest baseline~\cite{zhang2020ais}. Among them, pretrain-c achieves the highest closed-set accuracy (98.98\%), while sft-o-s2 also performs strongly (95.46\%) and offers the advantage of seamless generalization to new NKPs through simple database extension, without retraining. The sft-c attains slightly lower accuracy (92.32\%). This demonstrates both the effectiveness of the fine-tuning procedure and its potential for application to other evolving prediction tasks.

Considering the trade-off between accuracy and flexibility, we adopt sft-o-s2 for integrated inference in subsequent experiments, as it provides robust performance while supporting scalable, open-set NKP prediction.

\subsection{Effect of NKP Quality on Trajectory Prediction}

We analyze the impact of Next Key Point (NKP) quality on the prediction of downstream trajectory in Table~\ref{tag: stage 3 compare with approaches}. The \emph{correct} and \emph{wrong} NKP settings define oracle upper and adversarial lower bounds, within which all models using predicted NKPs fall, indicating that NKP prediction errors induce only limited and acceptable degradation.

\begin{table}[H]
\begin{center}
\caption{Ablation of NKP Prediction. \emph{Correct} and \emph{wrong} NKP are oracle and adversarial inputs, while SFT-O, SFT-C, and pretrain-C are Stage~2 NKP prediction strategies. NKPs that do not appear in the database are omitted.}
\label{tag: stage 3 compare with approaches}
\begin{small}
\begin{sc}
\begin{tabular}{lccc}
\toprule
metrics & MSEP & MSEC & MFD \\
\midrule
correct NKP & \textbf{3.75e-5} & \textbf{1.21e-6} & \textbf{0.61} \\
SFT-O & 4.48e-5 & 1.69e-6 & 0.63 \\
SFT-c & 5.67e-5 & 1.93e-6 & 0.69 \\
wrong NKP & 6.71e-4 & 4.45e-4 & 2.09 \\
pretrain-c & 5.18e-3 & 3.20e-6 & 4.24 \\
pure 4ch & 3.39e-4 & 7.25e-4 & 1.86  \\
\bottomrule
\end{tabular}
\end{sc}
\end{small}
\end{center}
\vskip -0.1in
\end{table}

\begin{table*}[t]
\centering
\small
\setlength{\tabcolsep}{1pt}
\renewcommand{\arraystretch}{0.9}
\caption{Ablation Experiment for NKP Prediction. 4ch refers to the model without NKP information, 6ch refers to the model with correct NKP, while 6ch w/ WD refers to the model with random but wrong NKP. }
\label{tab:ablation figures}

\begin{tabular}{C{0.14\textwidth}*{4}{C{0.205\textwidth}}}
\toprule
4ch &
\includegraphics[width=\linewidth]{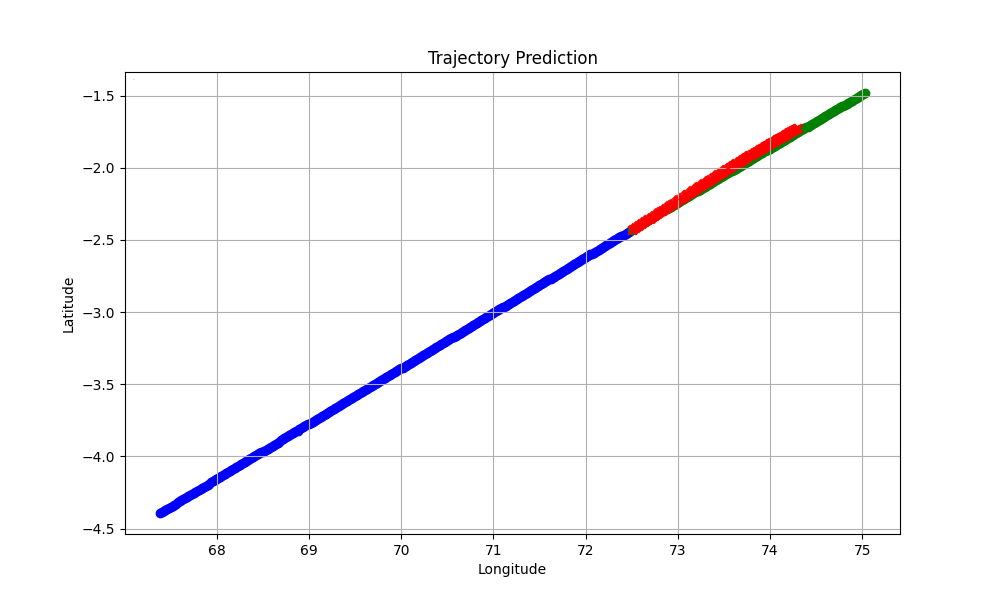} &
\includegraphics[width=\linewidth]{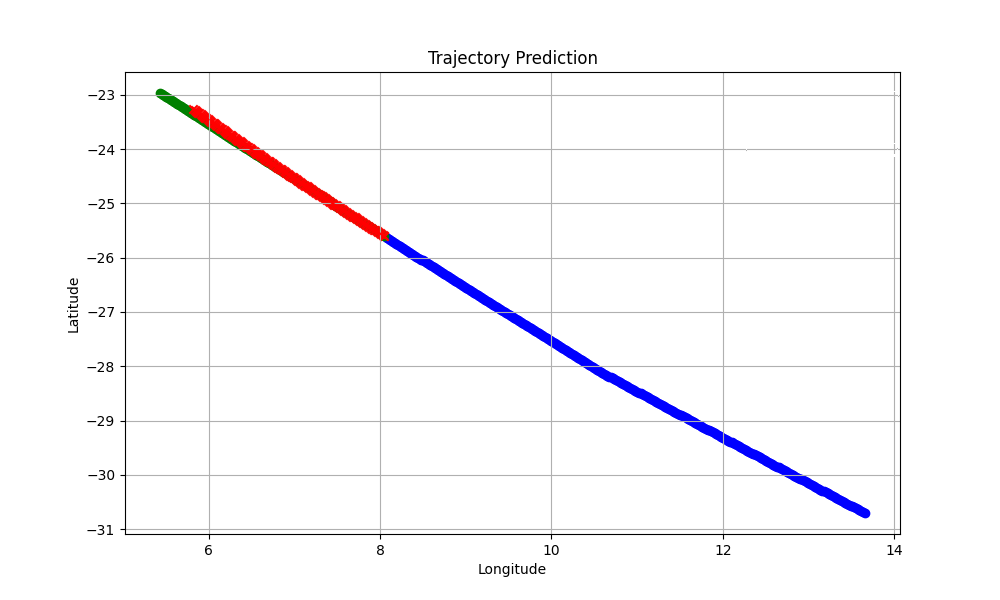} &
\includegraphics[width=\linewidth]{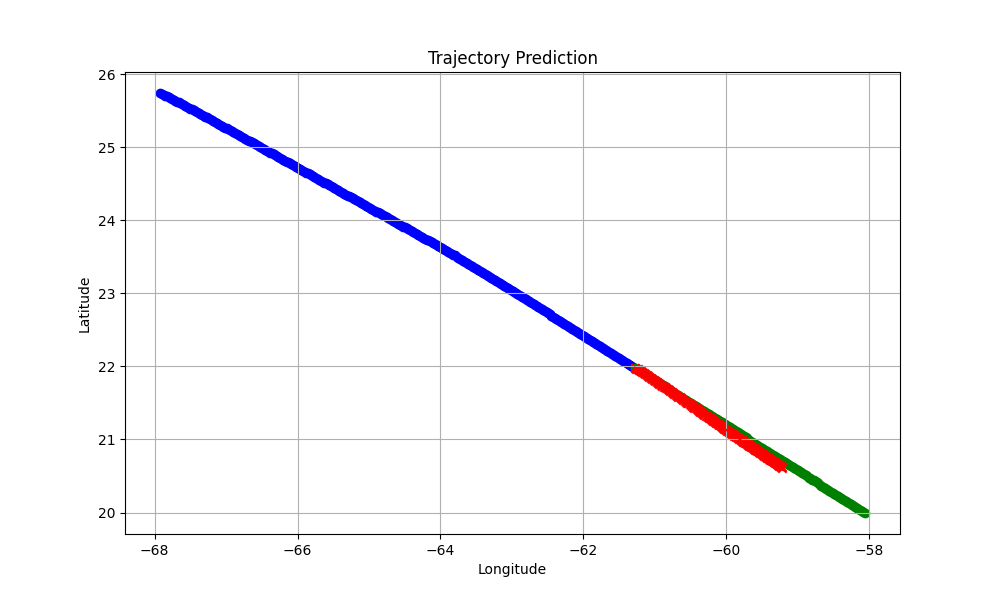} &
\includegraphics[width=\linewidth]{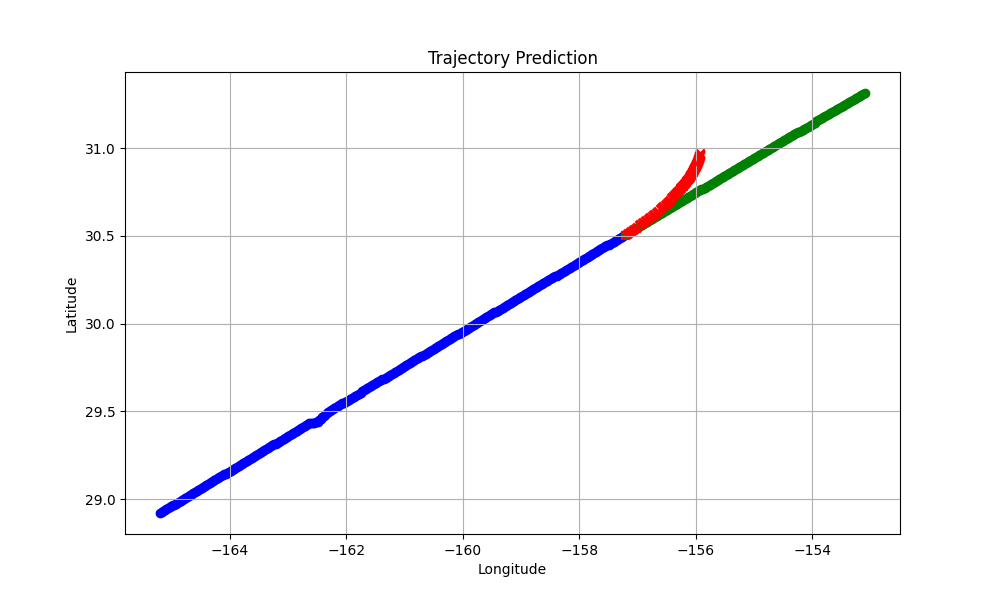} \\
6ch &
\includegraphics[width=\linewidth]{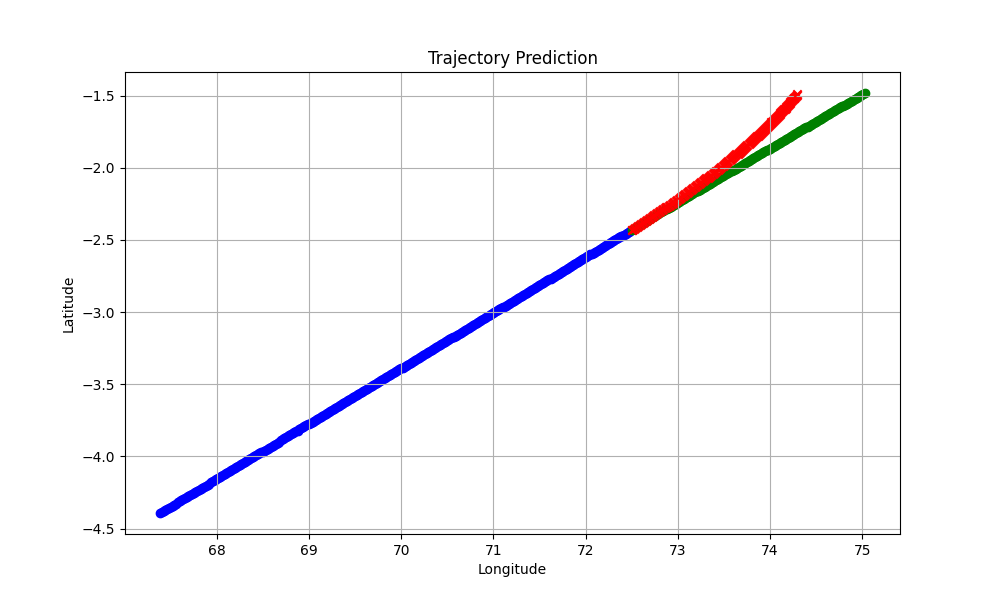} &
\includegraphics[width=\linewidth]{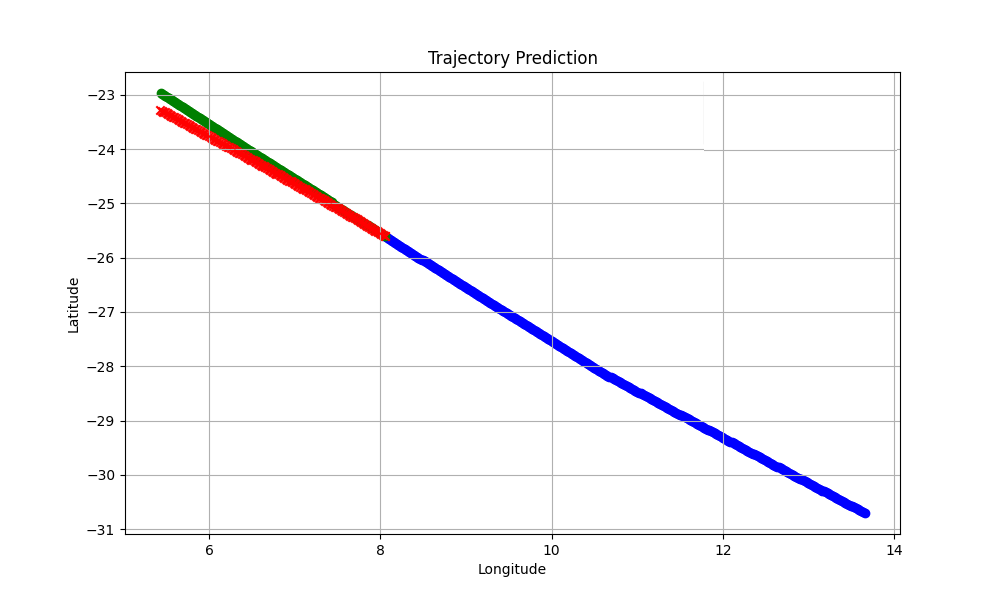} &
\includegraphics[width=\linewidth]{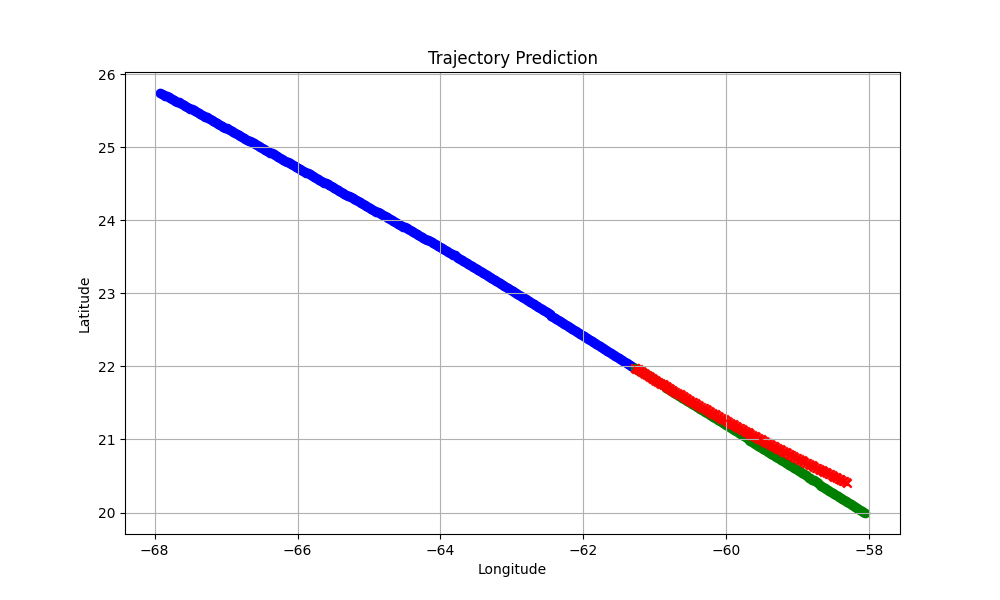} &
\includegraphics[width=\linewidth]{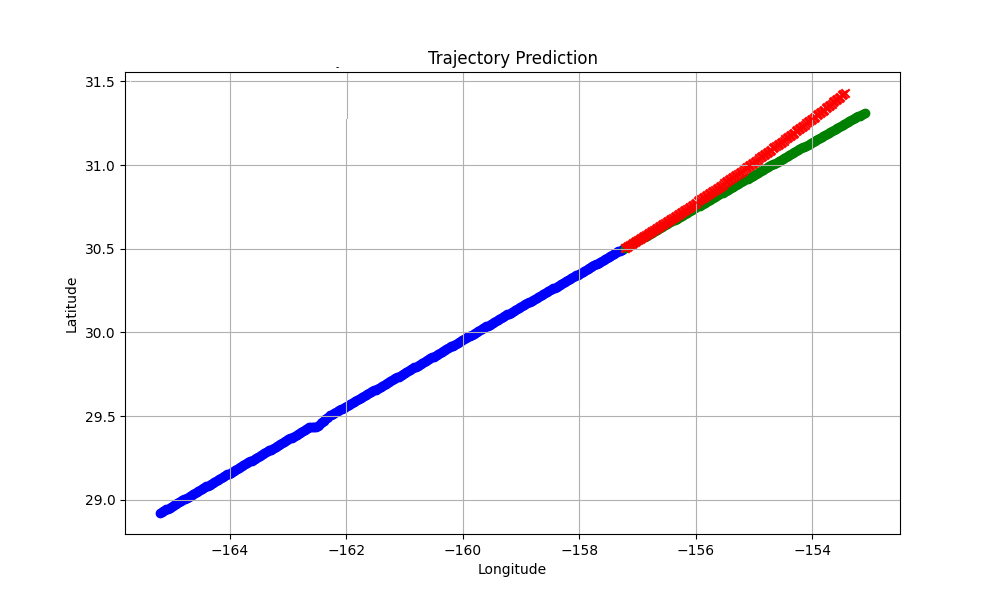} \\
6ch w/ WD &
\includegraphics[width=\linewidth]{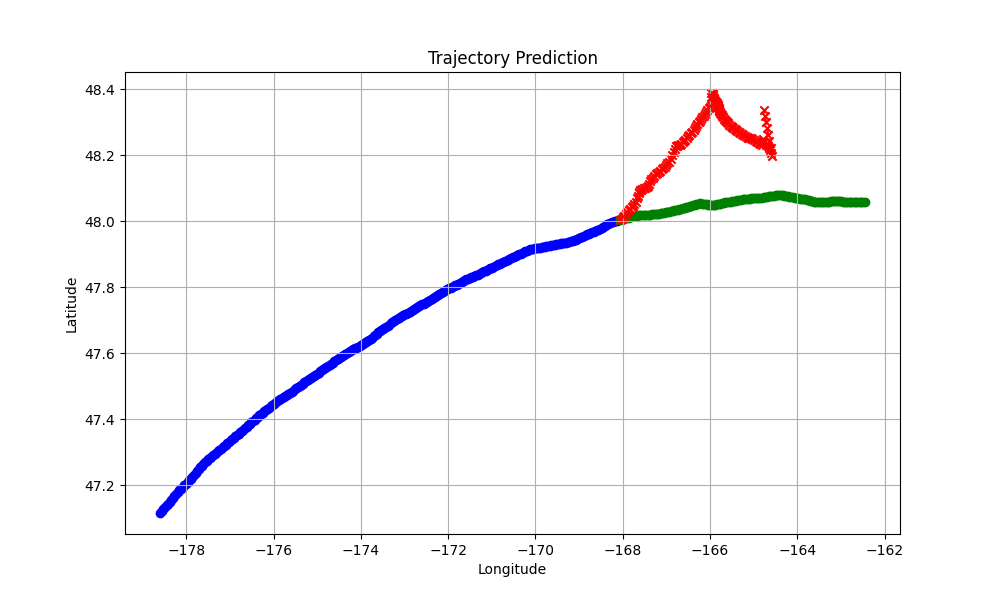} &
\includegraphics[width=\linewidth]{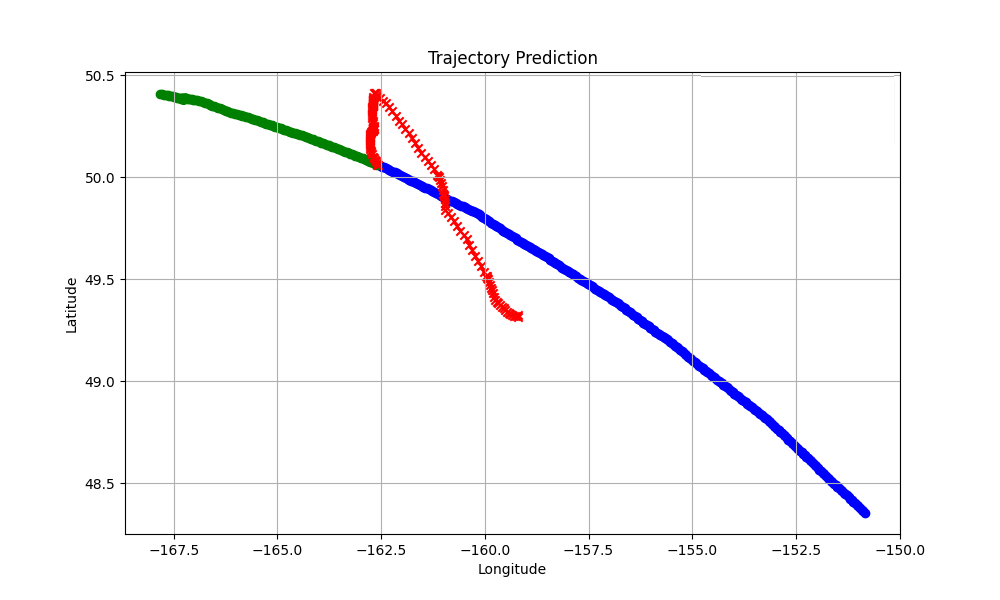} &
\includegraphics[width=\linewidth]{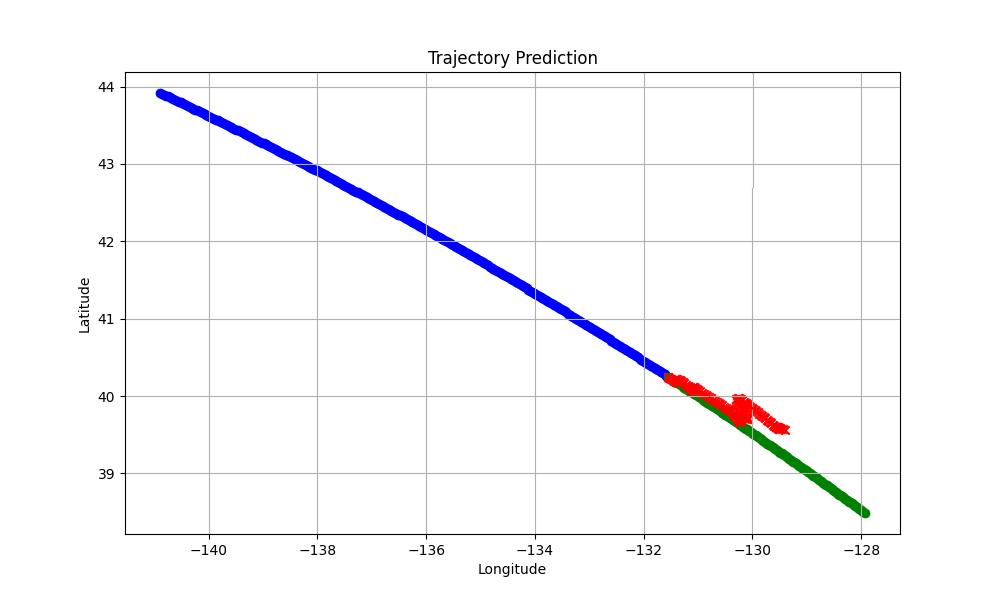} &
\includegraphics[width=\linewidth]{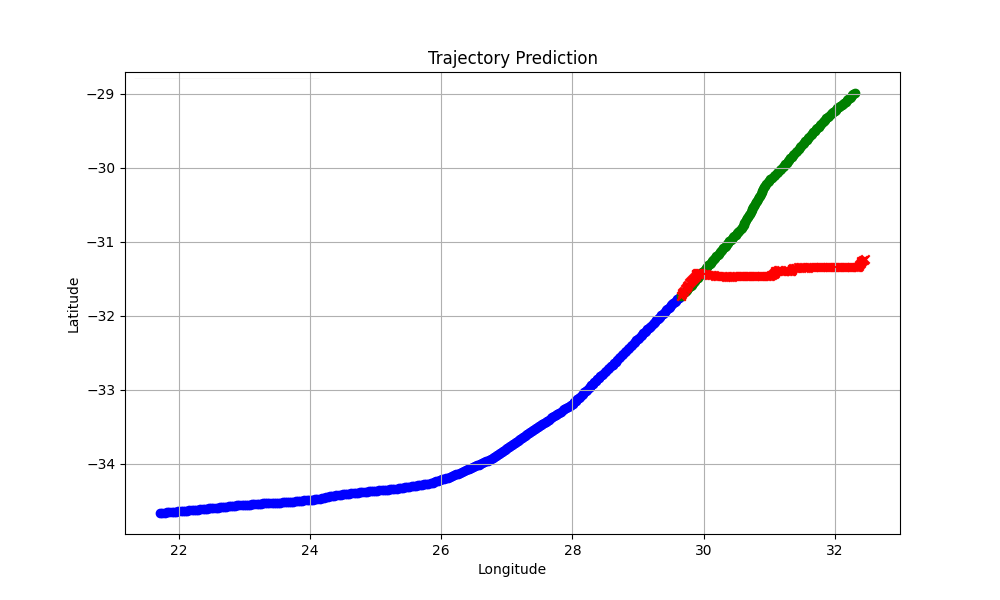} \\
\bottomrule
\end{tabular}
\end{table*}




Among the learned NKP prediction strategies, \emph{SFT-O} achieves the closest performance to the oracle \emph{correct NKP} setting, with only slight degradation in MSEP, MSEC, and MFD. This indicates that open-set NKP prediction provides accurate and stable semantic guidance, motivating its use as the final NKP prediction scheme. In contrast, \emph{SFT-C} performs worse across all metrics, suggesting that closed-set prediction is less robust in selecting appropriate future maritime anchors.

The \emph{pretrain-C} model yields the largest MSEP and MFD despite its relatively low MSEC, showing that locally smooth trajectories can still be globally inaccurate. This suggests that directly injecting non-temporal NKP information into temporal motion models, without explicitly decoupling NKP representation from motion modeling, may lead to ineffective or misleading semantic guidance.

Reliable NKP-conditioned models outperform the \emph{pure 4ch} baseline, confirming that NKPs provide useful global intent beyond local motion cues. However, the \emph{wrong NKP} setting causes substantial degradation and even performs worse than \emph{pure 4ch} in MSEP and MFD, indicating that incorrect semantic guidance can distort global trajectory prediction. Overall, both NKP quality and integration are crucial for accurate and globally consistent long-horizon trajectory prediction.

\subsection{Qualitative Results and Visualization}
\label{Integrated Trajectory Prediction}

We qualitatively evaluate trajectory predictions against the ground truth across different regions (Table~\ref{tab:qualitative comparison}). MP-LSTM \cite{gao2021novel} predicts trajectories using only two anchor points (support and destination). While the support point is generally more accurate, the destination is often less reliable, and an error in either anchor can induce a significant global deviation in the reconstructed trajectory. TrAISformer \cite{nguyen2024transformer}, applied from the official repository, shows pauses and stagnant latitude. In contrast, our model accurately captures both straight and turning trajectories, maintaining correct timing and coordinates, which demonstrates robust long-horizon prediction in complex scenarios.

Table~\ref{tab:ablation figures} highlights the effectiveness of the proposed NKP channel. Models with NKP consistently produce trajectories closer to the ground truth than those without it, using the same random seed for a fair comparison. The 6-channel row with an incorrect NKP input shows that the model effectively decouples NKP information: errors in NKP cause the predicted trajectory to twist, confirming the model’s sensitivity to NKP guidance while retaining robustness.

\subsection{Performance Across Prediction Horizons}
Table~\ref{tab: MFD along time span} presents the MFD results over a time span ranging from 12 points to 144 points for the prediction length. As the prediction time range increases, the MFDs of all comparison methods show a consistent upward trend. However, our model achieves the best performance across all prediction time ranges, demonstrating its stability in short- and long-term trajectory prediction.

\begin{table*}[h]
\centering
\setlength{\tabcolsep}{4pt}
\renewcommand{\arraystretch}{1.2}
\caption{Horizon-wise. MFD for each model under different prediction horizon from 12 points to 144 points. }
\label{tab: MFD along time span}
\begin{center}
\begin{small}
\begin{sc}
\begin{tabular}{lcccccccccccc}
\hline
MFD & 12 & 24 & 36 & 48 & 60 & 72 & 84 & 96 & 108 & 120 & 132 & 144 \\
\hline
our model & \textbf{0.012} & \textbf{0.033} & \textbf{0.061} & \textbf{0.097} & \textbf{0.139} & \textbf{0.189} & \textbf{0.245} & \textbf{0.308} & \textbf{0.379} & \textbf{0.457} & \textbf{0.541} & \textbf{0.631} \\
MP-LSTM & 0.310 & 0.506 & 0.586 & 0.559 & 0.470 & 0.417 & 0.533 & 0.993 & 1.640 & 2.418 & 3.321 & 4.347 \\
TrAISformer & 5.090 & 5.721 & 6.630 & 7.507 & 8.129 & 8.672 & 9.351 & 10.077 & 10.977 & 11.568 & 12.352 & 12.949 \\
DiffuTraj$^{*}$ & 1.980 & 2.169 & 2.260 & 2.347 & 2.419 & 2.518 & 2.633 & 2.737 & 2.857 & 2.992 & 3.132 &3.327 \\
AISFuser$^{*}$ & 2.009 & 3.731 & 5.353 & 6.749 & 7.926 & 9.578 & 10.816 & 12.502 & 13.805 & 16.702 & 18.943 &20.509 \\

\hline
\end{tabular}
\end{sc}
\end{small}
\end{center}
$^{*}$ Closed-source; implemented according to its original paper.
\end{table*}

\section{Conclusion}

We presented a hierarchical framework for long-horizon vessel trajectory prediction that incorporates Next Key Points (NKPs) as explicit semantic intent. By decoupling high-level intent inference from conditional motion modeling, the framework reduces long-term uncertainty and improves trajectory coherence. 

Beyond maritime navigation, this intent-conditioned factorization provides a general modeling paradigm for other long-horizon sequential prediction problems with hierarchical decision structures. Moreover, our approach can serve as a foundation for developing maritime AI solutions that can enable more effective trajectory understanding and offer a base representation for downstream tasks such as anomaly detection, route optimization, or multi-agent coordination. 

\newpage
\section*{Impact Statement}

This work studies long-horizon trajectory prediction for civil container vessels based on historical navigation data. Potential positive impacts include supporting strategic-level maritime transportation analysis, such as providing voyage-status estimates for ship operators and cargo owners, and enabling long-term route and schedule assessment under varying environmental conditions. These applications may improve operational planning and overall efficiency in maritime logistics.
The data used in this study consist of historical vessel trajectories and do not involve personal or sensitive information.


\bibliography{example_paper}
\bibliographystyle{icml2026}

\newpage
\appendix
\onecolumn

\section{On Information Gain by Conditioning}
\label{app:information_gain}

In this appendix, we formally clarify the relationship between conditioning on additional information and the reduction of uncertainty. While conditioning on an auxiliary variable does not necessarily increase posterior probabilities pointwise, it always improves certainty in an information-theoretic or decision-theoretic sense.

\subsection{Preliminaries}

Let $(\Omega,\mathcal{F},\mathbb{P})$ be a probability space and let $X,Y,Z$ be random variables defined on it. Denote by $\sigma(X)$ and $\sigma(X,Y)$ the $\sigma$-algebras generated by $X$ and $(X,Y)$ respectively, with $\sigma(X)\subseteq \sigma(X,Y)$.

We use $H(\cdot)$ to denote Shannon entropy and $I(\cdot;\cdot\mid\cdot)$ to denote conditional mutual information.

\subsection{Conditioning and Uncertainty Reduction}

We begin with a fundamental result stating that conditioning on additional information cannot increase uncertainty.

\begin{lemma}[Monotonicity of Conditional Entropy]
\label{lem:entropy_monotonicity}
For any random variables $X,Y,Z$, the following inequality holds:
\begin{equation}
H(Z\mid X,Y) \;\le\; H(Z\mid X).
\end{equation}
\end{lemma}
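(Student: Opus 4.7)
The plan is to reduce the claim to the non-negativity of conditional mutual information and then derive that non-negativity from the Gibbs/Jensen inequality applied to the $-\log$ function. Concretely, I would first rewrite the gap between the two conditional entropies as
\begin{equation}
H(Z\mid X) - H(Z\mid X,Y) \;=\; I(Z;Y\mid X),
\end{equation}
using the chain rule $H(Z,Y\mid X) = H(Y\mid X) + H(Z\mid X,Y) = H(Z\mid X) + H(Y\mid X,Z)$ together with the symmetric definition of conditional mutual information. This reformulation converts the monotonicity claim into the statement $I(Z;Y\mid X)\ge 0$, which is the canonical form in which this inequality is usually established.

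Next I would expand the conditional mutual information as an expected Kullback--Leibler divergence,
\begin{equation}
I(Z;Y\mid X) \;=\; \mathbb{E}_{X}\!\left[\, D_{\mathrm{KL}}\!\bigl(p_{Z,Y\mid X}\,\big\|\,p_{Z\mid X}\,p_{Y\mid X}\bigr)\,\right],
\end{equation}
and invoke Gibbs' inequality (equivalently, Jensen's inequality applied to the convex function $-\log$) to conclude that each inner KL divergence is non-negative, hence so is its expectation. Taking the expectation preserves the sign, so $I(Z;Y\mid X)\ge 0$, which together with the first step yields the desired inequality. For completeness, I would note that the same argument goes through in the measure-theoretic setting by conditioning on the $\sigma$-algebras $\sigma(X)\subseteq\sigma(X,Y)$ introduced in the preliminaries and appealing to the tower property, so the proof does not require $X,Y,Z$ to be discrete.

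There is no substantive obstacle: the result is standard and the only care needed is notational, namely making sure the chain rule is applied in the correct form and that the Radon--Nikodym derivatives in the KL expression are well defined (which requires the usual absolute-continuity assumption $p_{Z,Y\mid X}\ll p_{Z\mid X}p_{Y\mid X}$, automatic in the discrete case and standard in the general case). If the authors prefer to avoid measure-theoretic subtleties, a one-line alternative is to apply Jensen's inequality directly to $H(Z\mid X) - H(Z\mid X,Y) = \mathbb{E}[-\log(p(Z\mid X)/p(Z\mid X,Y))]$, treating the inner ratio as a conditional expectation and using convexity of $-\log$; this keeps the argument self-contained at the level needed for the appendix.
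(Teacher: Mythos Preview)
Your proposal is correct and follows exactly the same route as the paper: rewrite $H(Z\mid X)-H(Z\mid X,Y)=I(Z;Y\mid X)$ via the chain rule and then invoke non-negativity of conditional mutual information. The only difference is that you supply the justification for $I(Z;Y\mid X)\ge 0$ (via the expected KL form and Jensen/Gibbs), whereas the paper simply asserts it as a known fact.
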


\begin{proof}
By the chain rule of entropy, we have
\begin{equation}
H(Z\mid X) - H(Z\mid X,Y)
= I(Z;Y\mid X),
\end{equation}
where $I(Z;Y\mid X)$ denotes the conditional mutual information. Since mutual information is non-negative, i.e., $I(Z;Y\mid X)\ge 0$, the result follows.
\end{proof}

Lemma~\ref{lem:entropy_monotonicity} provides a precise mathematical interpretation of the statement that \emph{conditioning on additional information increases certainty}: the posterior uncertainty of $Z$ is reduced in an information-theoretic sense.

\subsection{Why Pointwise Posterior Improvement Is Not Guaranteed}

Despite the reduction in entropy, conditioning on $Y$ does not generally imply a pointwise increase of posterior probabilities.

\begin{lemma}[Tower Property of Conditional Expectation]
\label{lem:tower}
For any random variables $X,Y,Z$, we have
\begin{equation}
\mathbb{E}\!\left[\,\mathbb{P}(Z\mid X,Y)\mid X\,\right]
=
\mathbb{P}(Z\mid X).
\end{equation}
\end{lemma}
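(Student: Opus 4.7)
The plan is to reduce the statement to the classical tower property of conditional expectation. The key observation is that $\mathbb{P}(Z\mid X,Y)$ should be read as the conditional distribution of $Z$ given $(X,Y)$, which pointwise for any measurable event $A$ in the range of $Z$ takes the form $\mathbb{P}(Z\in A\mid X,Y) = \mathbb{E}[\mathbf{1}_{\{Z\in A\}}\mid \sigma(X,Y)]$. Once reformulated this way, the claim becomes an immediate instance of iterated expectations applied to the inclusion $\sigma(X)\subseteq \sigma(X,Y)$, which itself is evident since $(X,Y)$ trivially determines $X$.

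First I would fix an arbitrary measurable event $A$ in the range of $Z$ and set $W := \mathbf{1}_{\{Z\in A\}}$, which is integrable since it is bounded by $1$. By the definition of conditional probability, $\mathbb{P}(Z\in A\mid X,Y) = \mathbb{E}[W\mid \sigma(X,Y)]$ almost surely. Next I would invoke the standard tower property: for sub-$\sigma$-algebras $\mathcal{H}\subseteq \mathcal{G}$ and any integrable $W$, one has $\mathbb{E}[\mathbb{E}[W\mid \mathcal{G}]\mid \mathcal{H}] = \mathbb{E}[W\mid \mathcal{H}]$. Applying this with $\mathcal{G}=\sigma(X,Y)$ and $\mathcal{H}=\sigma(X)$ yields $\mathbb{E}[\mathbb{P}(Z\in A\mid X,Y)\mid X] = \mathbb{P}(Z\in A\mid X)$ almost surely. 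Since $A$ was arbitrary, the identity holds at the level of the full conditional distribution (via a regular conditional probability, which exists whenever the state space of $Z$ is a standard Borel space), giving the claim as stated.

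The main obstacle is not mathematical depth but notational precision: the symbol $\mathbb{P}(Z\mid X,Y)$ denotes a transition kernel rather than a single number, so care must be taken either to fix an event $A$ and argue pointwise in $A$, or to invoke a regular conditional distribution and select a version for which the identity holds simultaneously for all $A$. A related subtlety is the almost-sure qualifier inherent in conditional expectation: both sides are equivalence classes in $L^1$, so the equality should be interpreted up to $\mathbb{P}$-null sets. Finally, to tie the result back to the narrative of \cref{app:information_gain}, I would remark that this tower identity is precisely why conditioning on $Y$ cannot yield a uniform pointwise improvement of the posterior: because the $Y$-average of $\mathbb{P}(Z\mid X,Y)$ must equal $\mathbb{P}(Z\mid X)$, any region where the refined posterior exceeds the coarser one must be balanced by a region where it falls below. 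This complements \cref{lem:entropy_monotonicity} by showing that the informational gain from conditioning is distributional in nature rather than pointwise.
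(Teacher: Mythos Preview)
Your proposal is correct and follows essentially the same approach as the paper: both arguments rewrite the conditional probability as a conditional expectation of an indicator, $\mathbb{P}(Z\in A\mid X,Y)=\mathbb{E}[\mathbf{1}_{\{Z\in A\}}\mid X,Y]$, and then invoke the classical tower property with $\sigma(X)\subseteq\sigma(X,Y)$. Your version is more careful about the kernel interpretation, the almost-sure qualifier, and the need for a regular conditional distribution, whereas the paper's proof is a one-line application of iterated expectations; but the underlying idea is identical.
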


\begin{proof}
This follows directly from the tower property of conditional expectation:
\begin{equation}
\mathbb{E}\!\left[\mathbb{E}[\mathbf{1}_Z\mid X,Y]\mid X\right]
=
\mathbb{E}[\mathbf{1}_Z\mid X]
=
\mathbb{P}(Z\mid X).
\end{equation}
\end{proof}

Lemma~\ref{lem:tower} implies that any pointwise monotonic inequality such as $\mathbb{P}(Z\mid X,Y)\ge \mathbb{P}(Z\mid X)$ cannot hold almost surely unless equality holds everywhere. Therefore, the benefit of conditioning should not be interpreted as a pointwise increase in posterior probabilities.

\subsection{Decision-Theoretic Interpretation}

The information gain induced by conditioning admits an equivalent interpretation in decision theory.

\begin{theorem}[Bayes Risk Monotonicity]
\label{thm:bayes_risk}
Let $L$ be any loss function. Then
\begin{equation}
\inf_{\hat Z(X,Y)} \mathbb{E}[L(Z,\hat Z)]
\;\le\;
\inf_{\hat Z(X)} \mathbb{E}[L(Z,\hat Z)].
\end{equation}
\end{theorem}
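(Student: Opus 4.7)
The plan is to reduce the statement to the elementary observation that the infimum of a functional over a larger feasible set is no greater than its infimum over any subset, combined with the fact that an estimator using only $X$ is a special case of an estimator allowed to use $(X,Y)$. Intuitively, this is the decision-theoretic counterpart of Lemma~\ref{lem:entropy_monotonicity}: access to more information cannot hurt an optimally chosen rule.

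Concretely, the steps I would carry out in order are as follows. First, I would make the two optimization domains precise by writing
\begin{equation*}
\mathcal{C}_X = \{\hat Z : \hat Z \text{ is } \sigma(X)\text{-measurable}\},
\qquad
\mathcal{C}_{X,Y} = \{\hat Z : \hat Z \text{ is } \sigma(X,Y)\text{-measurable}\},
\end{equation*}
so that the left and right infima are taken over $\mathcal{C}_{X,Y}$ and $\mathcal{C}_X$ respectively. Second, since $\sigma(X) \subseteq \sigma(X,Y)$, every element of $\mathcal{C}_X$ is also an element of $\mathcal{C}_{X,Y}$, so $\mathcal{C}_X \subseteq \mathcal{C}_{X,Y}$. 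Third, the risk functional $\hat Z \mapsto \mathbb{E}[L(Z,\hat Z)]$ is identically defined on both classes, so taking the infimum over the larger class cannot yield a larger value, which is exactly the claimed inequality.

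As an alternative derivation that more explicitly connects the result to the rest of the appendix, I would invoke the tower property used in Lemma~\ref{lem:tower}. Writing $\mathbb{E}[L(Z,\hat Z(X,Y))] = \mathbb{E}\bigl[\mathbb{E}[L(Z,\hat Z(X,Y)) \mid X,Y]\bigr]$, one sees that the Bayes rule
\begin{equation*}
\hat Z^{\star}(X,Y) \in \arg\min_{z} \mathbb{E}[L(Z,z) \mid X,Y]
\end{equation*}
minimizes the inner expectation pointwise, and in particular achieves risk no larger than that of any $\sigma(X)$-measurable rule plugged into the same identity. This viewpoint also mirrors the entropy argument of Lemma~\ref{lem:entropy_monotonicity} at the level of risk rather than information.

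The hard part is not algebraic but measure-theoretic hygiene: I would need to assume $L$ is jointly measurable and either non-negative or integrable so that all expectations are well-defined, and, if one wants the Bayes-rule construction, to invoke a measurable selection theorem to guarantee that $\hat Z^{\star}$ is itself $\sigma(X,Y)$-measurable. Under these standard regularity conditions, which I would state explicitly in a one-line preamble, the proof collapses to the data-processing intuition that "using more information cannot hurt an optimal decision rule," and no genuine obstacle remains.
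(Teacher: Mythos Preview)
Your proposal is correct and your primary argument is exactly the paper's proof: the paper simply observes that any decision rule based on $X$ alone is a special case of a rule based on $(X,Y)$ that ignores $Y$, so the infimum over the larger class cannot increase the risk. Your formalization via $\mathcal{C}_X \subseteq \mathcal{C}_{X,Y}$, the alternative tower-property derivation, and the measure-theoretic caveats all go beyond the paper's one-line argument, but the core idea is identical.
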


\begin{proof}
Any decision rule based on $X$ alone is a special case of a decision rule based on $(X,Y)$ that ignores $Y$. Taking the infimum over a larger class of decision rules cannot increase the minimum achievable risk.
\end{proof}

Theorem~\ref{thm:bayes_risk} formalizes the intuition that additional information cannot worsen optimal decision performance.

\subsection{Remarks on Stronger Monotonicity Notions}

We emphasize that stronger notions of monotonicity, such as pointwise improvement of posterior probabilities or monotonicity in likelihood ratio order, require additional structural assumptions on the conditional distribution $P(Y\mid Z,X)$. Without such assumptions, uncertainty reduction should be understood exclusively in an information-theoretic or decision-theoretic sense, as established above.

\section{Additional Qualitative Results}
\label{app:qualitative}
In this appendix, we provide additional qualitative comparisons between
MP-LSTM, TrAISformer, and our method on diverse navigation scenarios.
\begin{table*}[h]
\centering
\small
\setlength{\tabcolsep}{2pt}
\renewcommand{\arraystretch}{0.9}
\caption{Additional qualitative trajectory prediction examples.}
\label{tab:qualitative_appendix}

\begin{tabular}{*{4}{C{0.245\textwidth}}}
\toprule
\includegraphics[width=\linewidth]{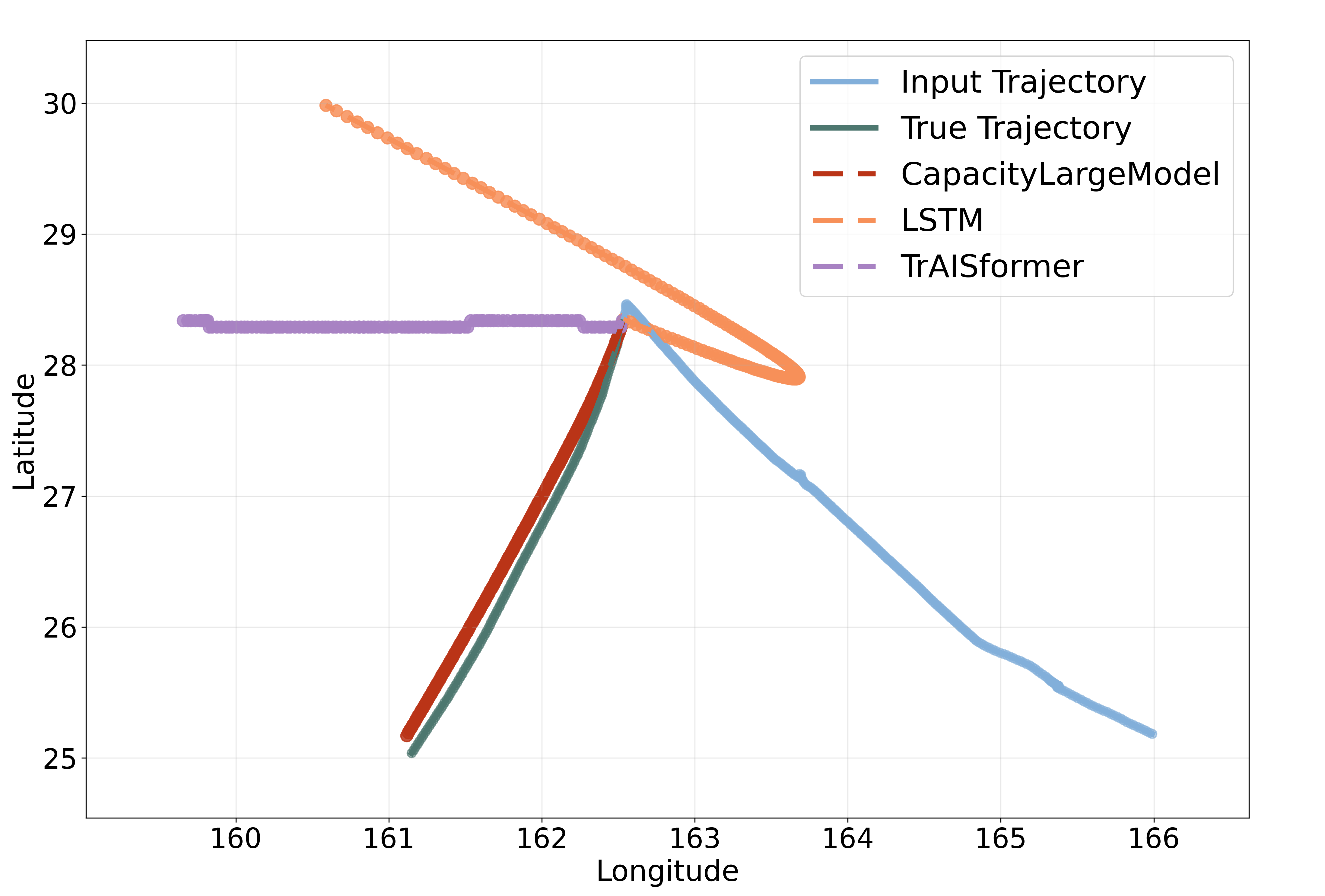} &
\includegraphics[width=\linewidth]{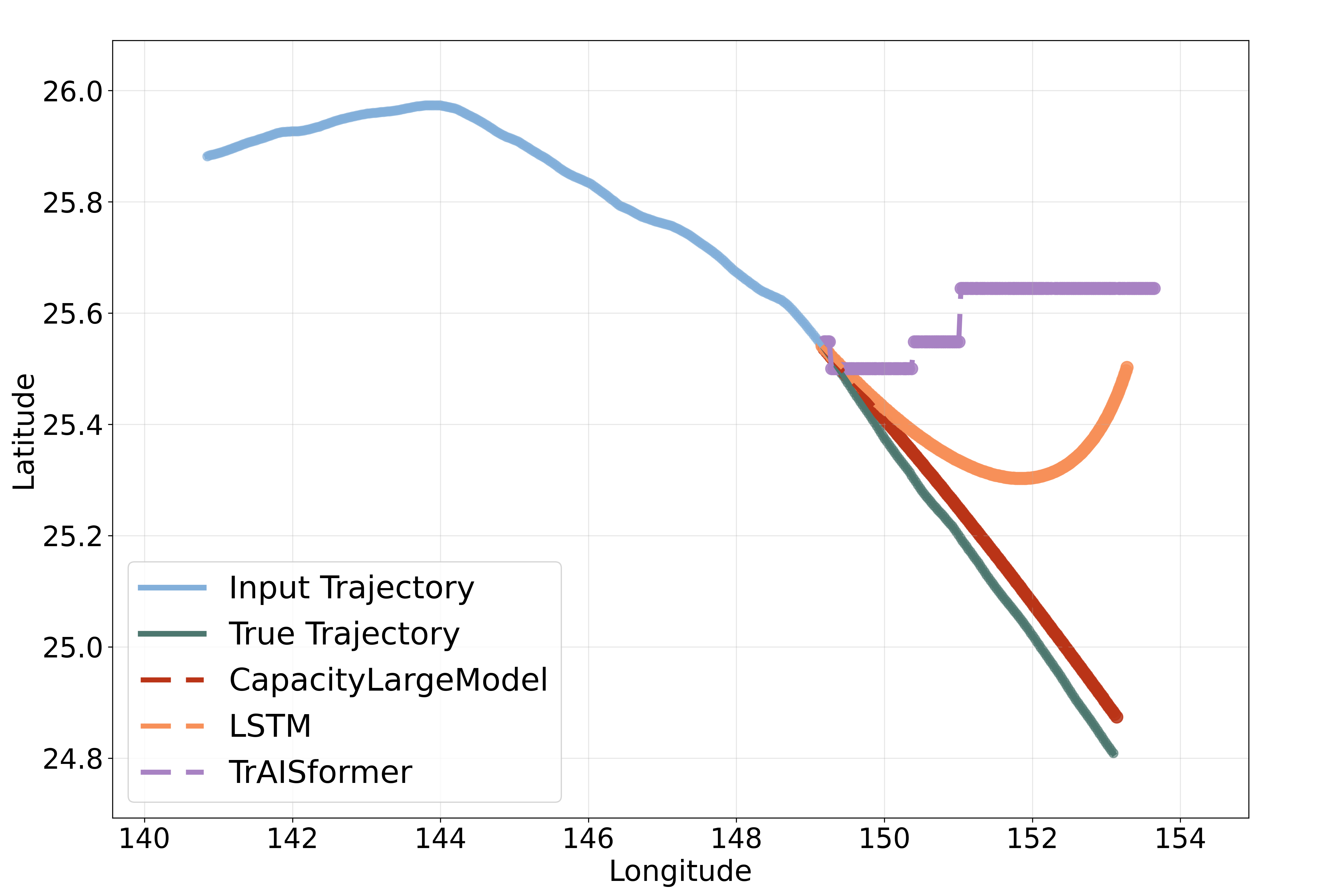} &
\includegraphics[width=\linewidth]{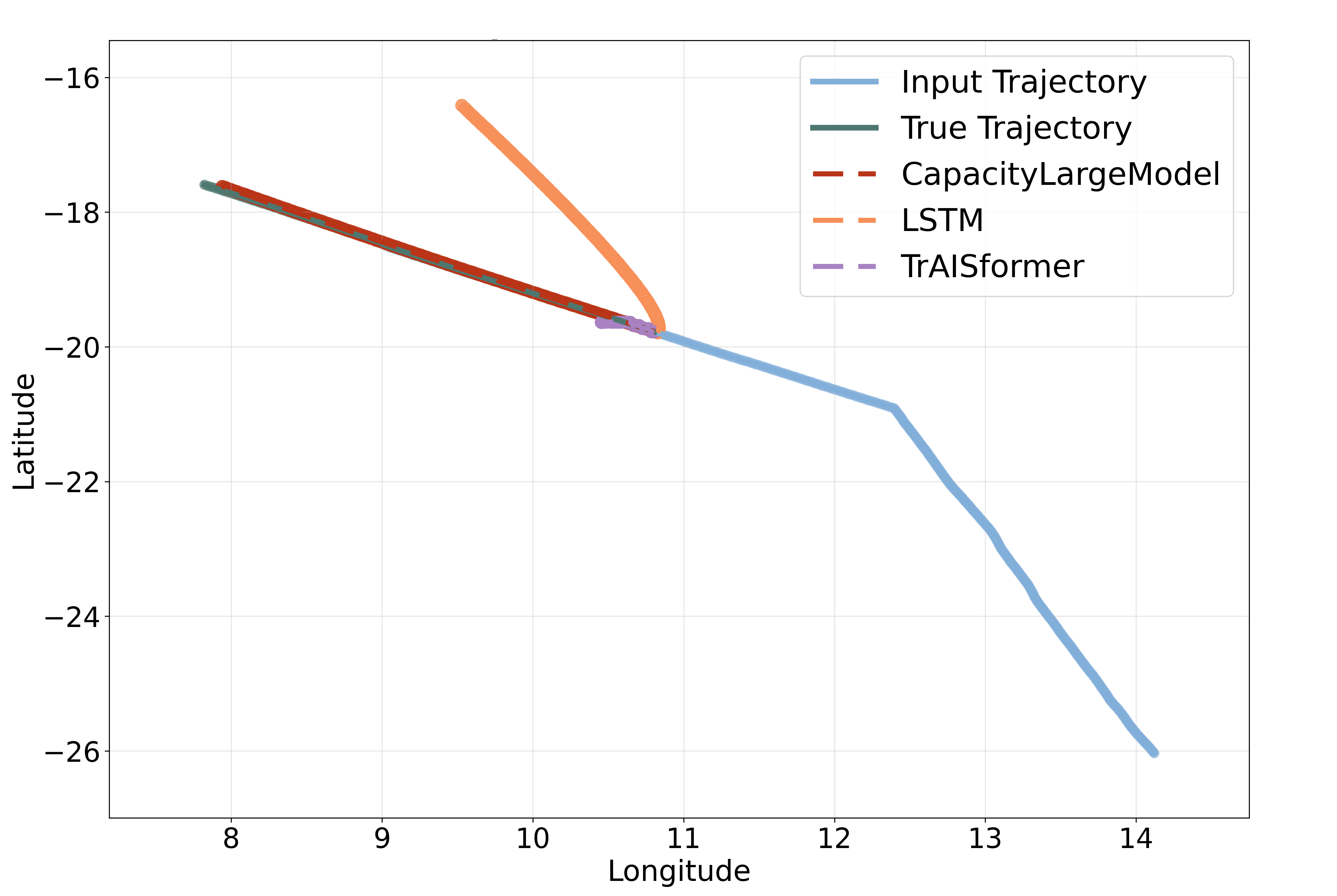} &
\includegraphics[width=\linewidth]{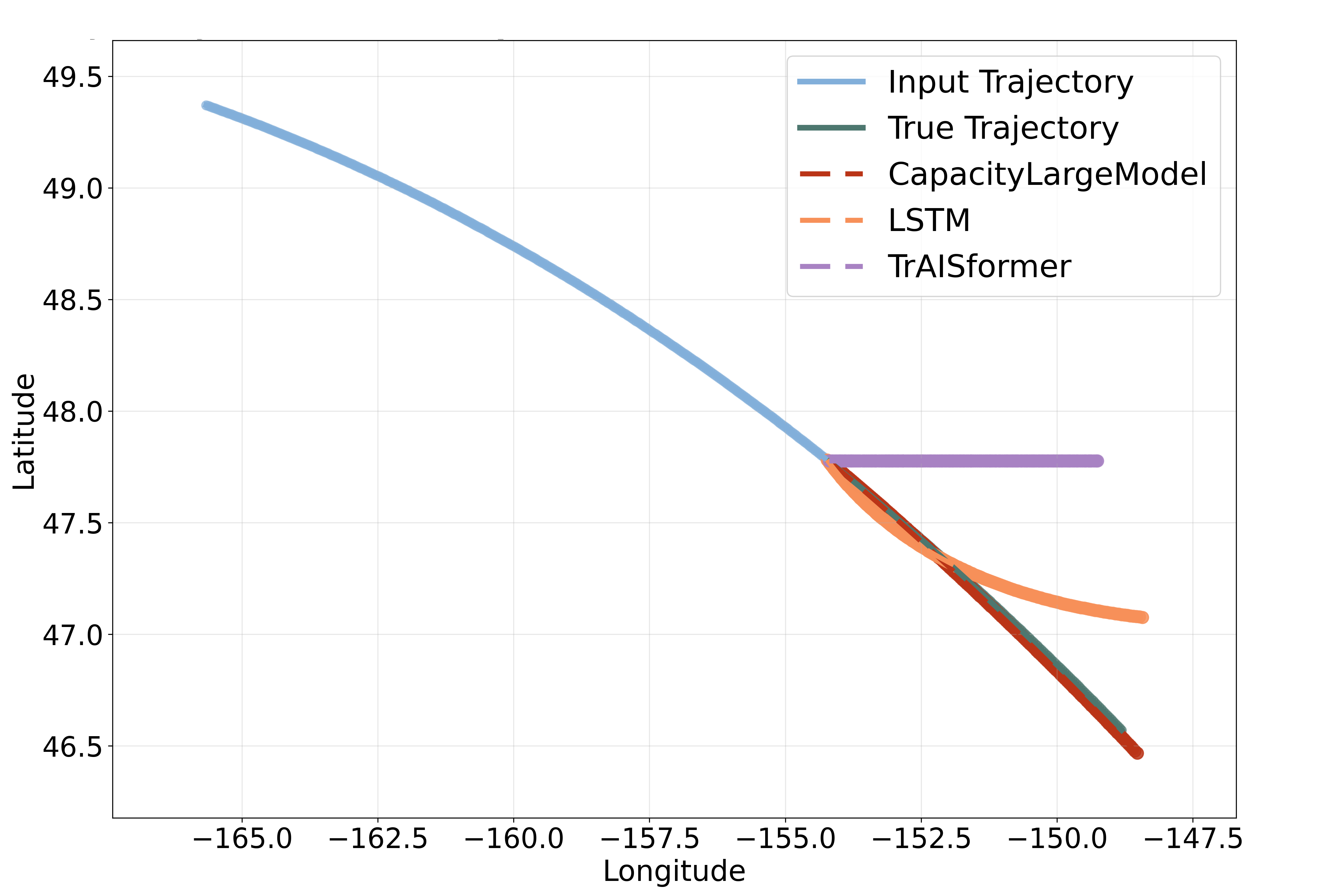} \\
\includegraphics[width=\linewidth]{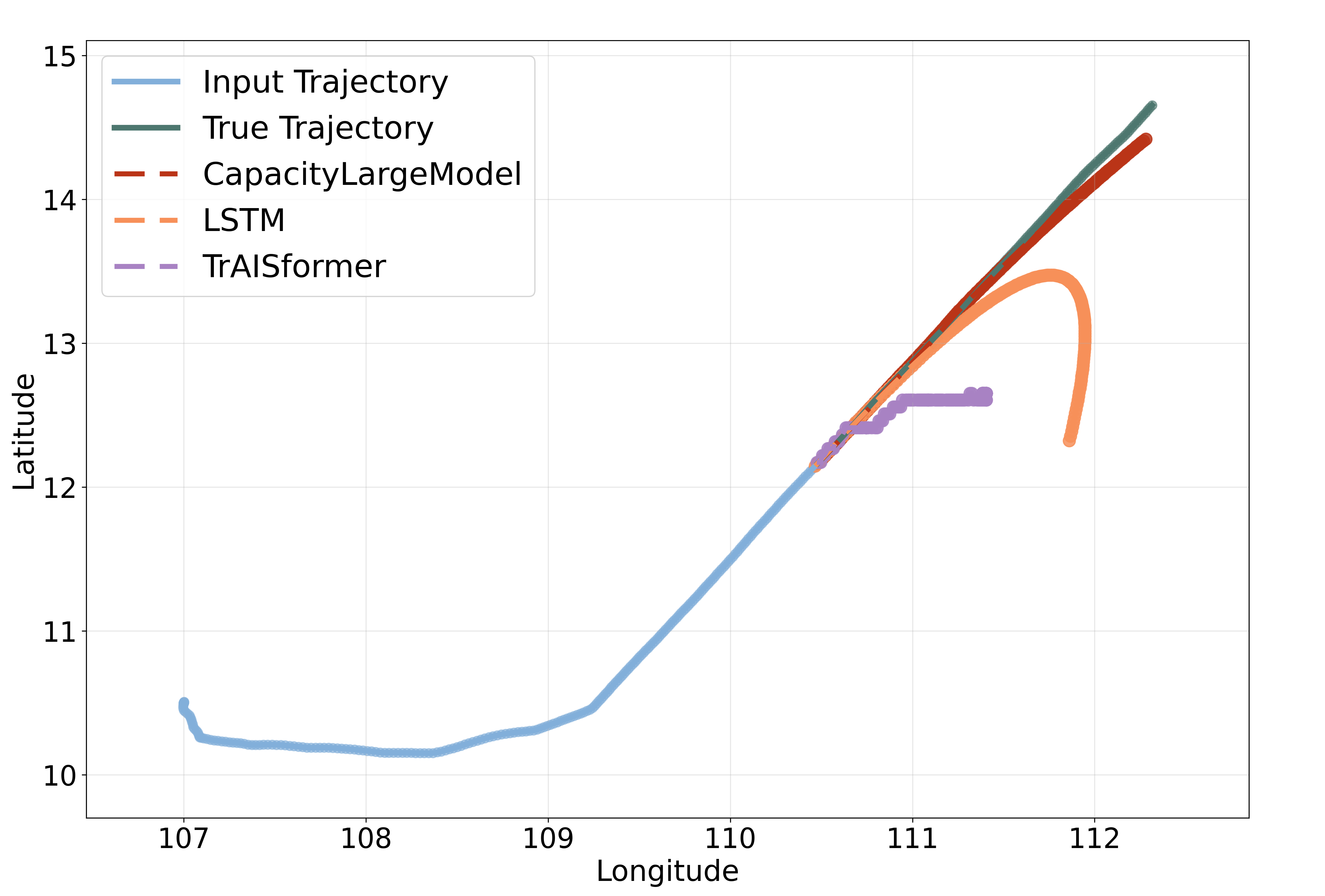} &
\includegraphics[width=\linewidth]{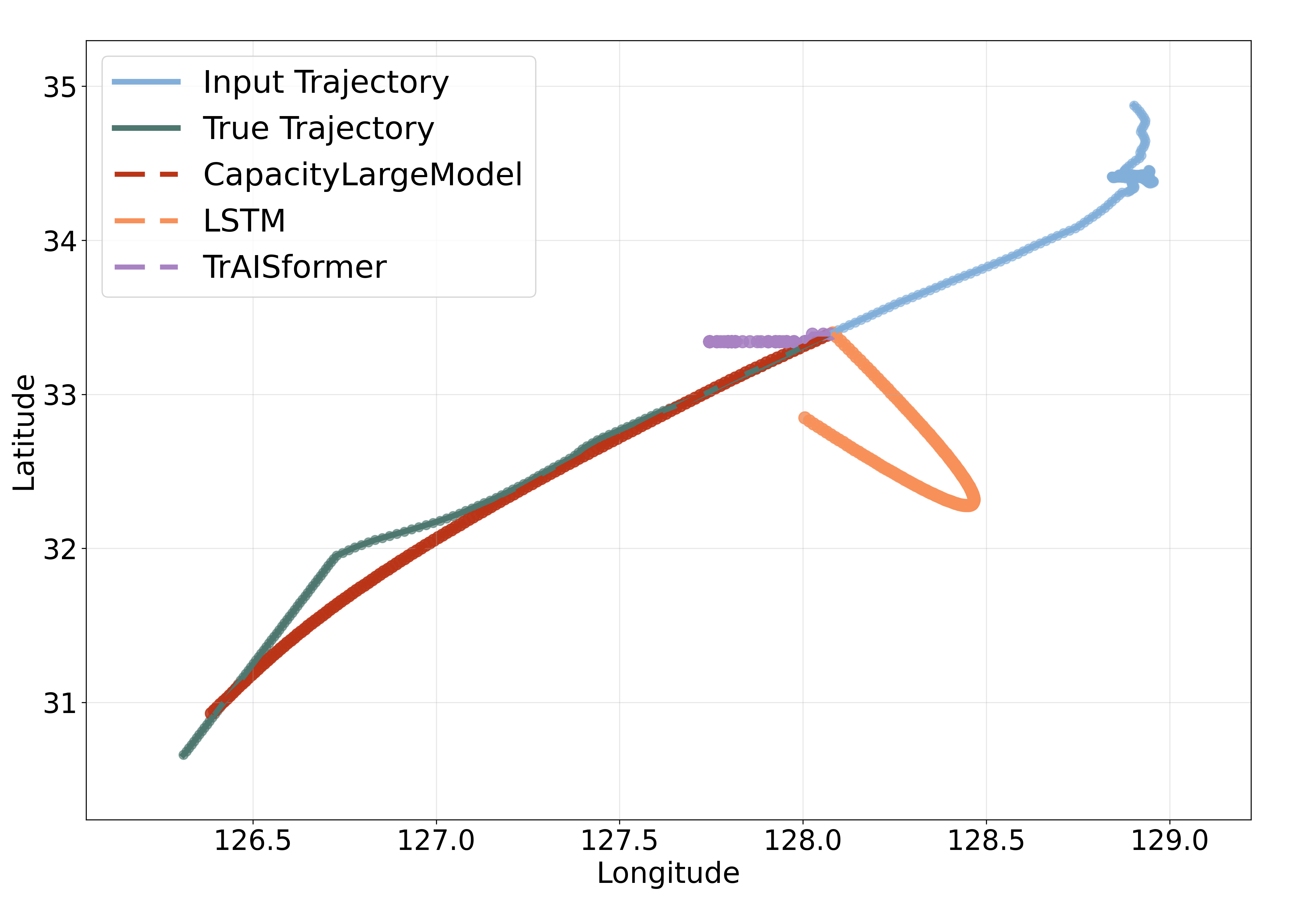} &
\includegraphics[width=\linewidth]{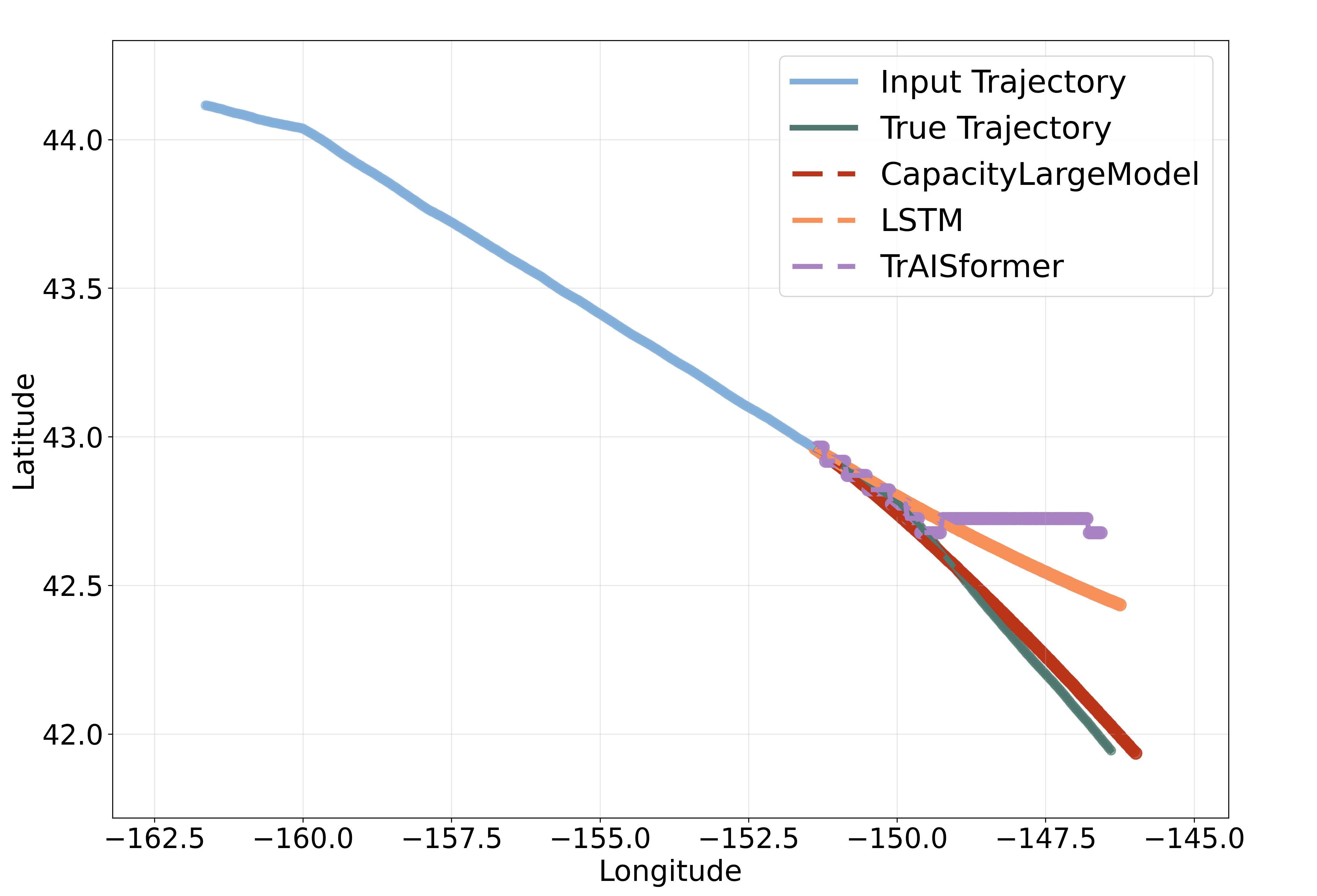} &
\includegraphics[width=\linewidth]{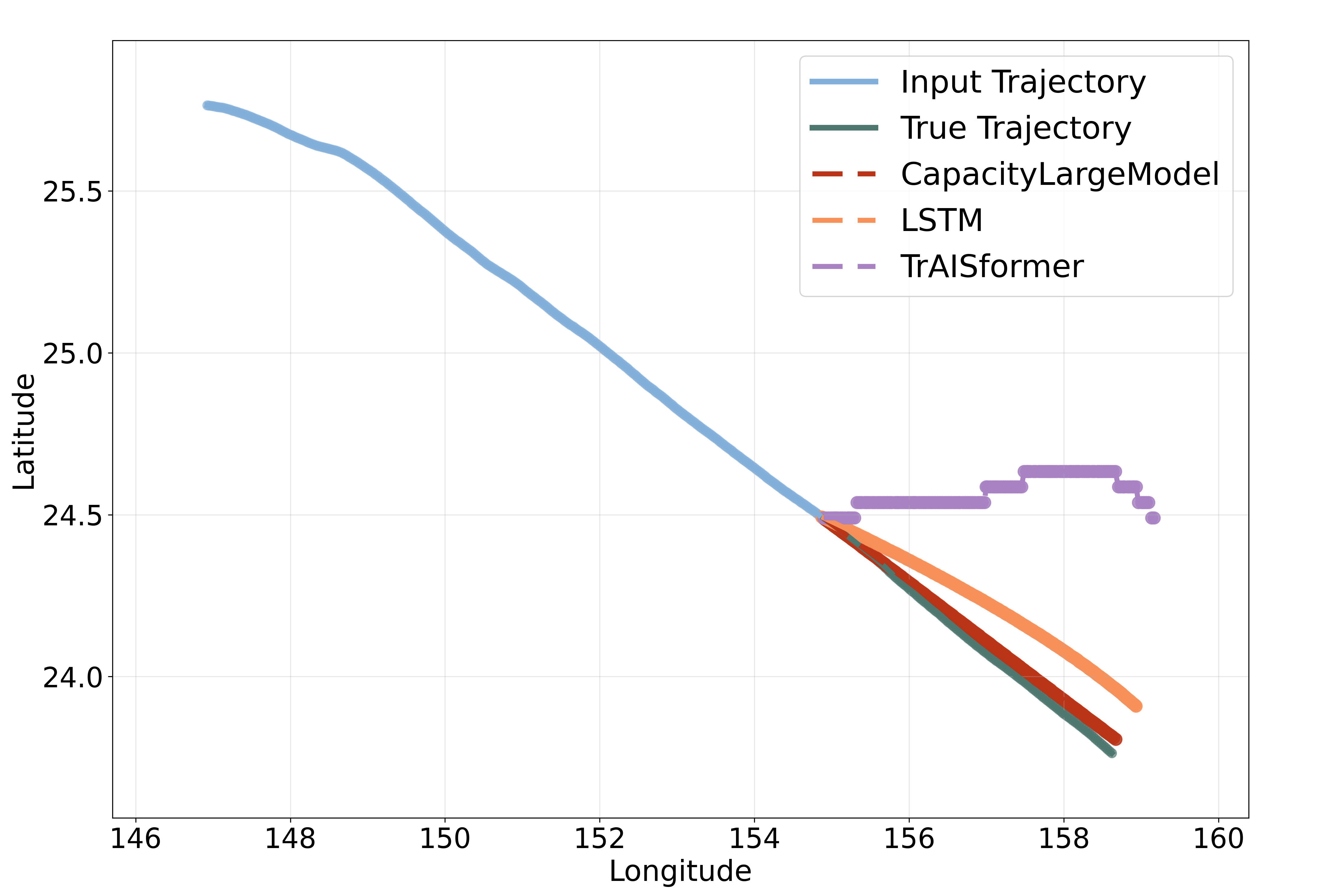} \\
\bottomrule
\end{tabular}
\end{table*}

\section{Detailed Derivation of Next-Timestamp Coordinates}
\label{derivation}
Given current coordinates $lat_0, lon_0$, current SOG $v$, and COG $\theta$, with a given short time period $T$, the new vessel coordinates $lat_1, lon_1$ were expected to be calculated. Assume that the vessels perform uniform linear motion in the short time period. Then the coordinates in the next timestamp are:

Then, in the latitudinal direction, the SOG components are:
\begin{eqnarray}
v_{lat}=v\ \cos \theta
\label{v_lat}
\end{eqnarray}
As the radii remain, 
\begin{eqnarray}
\omega_{lat}=\frac{v_{lat}}{R}
\label{omega_lat}
\end{eqnarray}
And the changes in latitude are,
\begin{eqnarray}
\Delta lat=\omega_{lat}t=\frac{vt\cos\theta}{R}
\label{delta lat}
\end{eqnarray}
The relationships between latitude and time are
\begin{eqnarray}
lat(t)=lat_0+\frac{v\cos\theta }{R}t
\label{lat_t}
\end{eqnarray}

In the longitudinal direction, the SOG components are:
\begin{eqnarray}
v_{lon}=v\sin \theta
\label{v_lon}
\end{eqnarray}
The relationships between radii and latitude are
\begin{eqnarray}
r=R\cos(lat)
\label{r_lon}
\end{eqnarray}
The angular velocity is
\begin{eqnarray}
\omega_{lon}=\frac{v\sin\theta}{R\cos(lat)}
\label{omega_lon}
\end{eqnarray}
Take (\ref{lat_t}) into (\ref{omega_lon}) that
\begin{eqnarray}
\omega_{lot}(t)=\frac{v\sin\theta}{R\cos(lat)}=\frac{v\sin\theta}{R\cos(lat_0+\frac{v\cos\theta}{R}t)}
\label{omega_lot_t}
\end{eqnarray}
Then the changes of longitudes are
\begin{eqnarray}
\Delta lon = \int^t_0\omega_{lon}(t)dt=\frac{v\sin\theta}{R}\int^t_0\frac{dt}{\cos(lat_0+\frac{v\cos\theta}{R}t)}
\label{delta_lon}
\end{eqnarray}
With the formula
\begin{eqnarray}
\int\frac{1}{\cos\theta}d\theta=\ln|\sec\theta+\tan\theta|+C
\label{integral}
\end{eqnarray}
We can derive
\begin{eqnarray}
\Delta lon = \frac{Rv\sin\theta}{v\cos\theta R}\int^t_0\frac{1}{\cos(lat_0+\frac{v\cos\theta}{R}t)}d(\frac{v\cos\theta}{R})
\label{v_lon}
\end{eqnarray}
\begin{eqnarray}
=\tan\theta(\ln|\sec\theta+\tan\theta|)|^{\frac{v\cos\theta}{R}T+lat_0}_{lat_0}
\label{long1}
\end{eqnarray}
Denoting $lat_1=lat_0+\frac{v\cos\theta}{R}T$
\begin{eqnarray}
\Delta lon=\tan\theta\ln\frac{|\sec (lat_1)+\tan (lat_1)|}{|\sec(lat_0)+\tan(lat_0)|}
\label{long2}
\end{eqnarray}

Finally, we can conclude that the new coordinates are
\begin{eqnarray}
lat_1=lat_0+\frac{v\cos\theta}{R}T
\label{lat_1}
\end{eqnarray}
\begin{eqnarray}
lon_1=lon_0+\tan\theta\ln\frac{|\sec (lat_1)+\tan (lat_1)|}{|\sec(lat_0)+\tan(lat_0)|}
\label{v_lon}
\end{eqnarray}

\section{Numerical Stability of Autoregressive Rollout}
\label{app:stability}

We analyze the numerical behavior of autoregressive trajectory rollout under different coordinate update formulations.

Let $s_t \in \mathbb{R}^d$ denote the true vessel state at time $t$ and $\hat{s}_t$ the predicted state, with prediction error $e_t = \hat{s}_t - s_t$. In autoregressive forecasting, predictions are propagated according to
\begin{equation}
\hat{s}_{t+1} = f(\hat{s}_t),
\end{equation}
which induces the first-order error dynamics
\begin{equation}
e_{t+1} \approx J_f(s_t)\, e_t,
\end{equation}
where $J_f$ denotes the Jacobian of the state transition function evaluated at $s_t$.
Long-horizon numerical stability therefore depends critically on the spectral properties of $J_f$.

\subsection{Spherical Geometry--Based Updates}

In spherical or geodesic formulations, state updates are typically implemented as a composition of coordinate projection and motion integration:
\begin{equation}
f_{\mathrm{sph}} = \Pi^{-1} \circ G \circ \Pi,
\end{equation}
where $\Pi$ maps geographic coordinates to a local tangent plane, and $G$ performs motion updates in that plane.

The Jacobians of $\Pi$ and $\Pi^{-1}$ depend nonlinearly on latitude and heading, introducing location-dependent scaling factors. The resulting error propagation becomes
\begin{equation}
e_{t+1} \approx J_{\Pi^{-1}}\, J_G\, J_{\Pi}\, e_t.
\end{equation}
Even if $J_G$ is well-conditioned, repeated application of $J_{\Pi}$ and $J_{\Pi^{-1}}$ across time steps leads to multiplicative error accumulation.
As a result, small numerical inaccuracies can be systematically amplified during long-horizon rollout, especially at high latitudes or under frequent heading changes.

\subsection{Locally Euclidean Update (Ours)}

In contrast, our formulation performs state updates directly in a locally Euclidean coordinate system consistent with AIS-reported SOG and COG.
Under the local linear motion assumption, the update function is smooth and approximately affine over short time intervals:
\begin{equation}
f_{\mathrm{ours}}(s_t) = s_t + \Delta t \cdot v(s_t),
\end{equation}
where $v(\cdot)$ denotes the velocity field induced by SOG and COG.

The corresponding Jacobian satisfies
\begin{equation}
J_f \approx I + \mathcal{O}(\Delta t),
\end{equation}
where $I$ is the identity matrix.
Consequently, prediction errors accumulate approximately additively rather than multiplicatively:
\begin{equation}
\|e_T\| \le \|e_0\| + \mathcal{O}(T \Delta t).
\end{equation}

This behavior contrasts sharply with spherical formulations, where the effective Jacobian deviates from identity due to repeated nonlinear projections.
Therefore, our locally Euclidean update is numerically more stable under long-horizon autoregressive rollout and prevents systematic drift caused by geometric distortion.

\section{Dataset and Data Preprocessing}
\label{Dataset and Data Preprocessing}

\subsection{Data Description and Coverage for Private Dataset}
The dataset is proprietary and cannot be publicly released due to licensing restrictions. However, all baseline methods are trained and evaluated on the same dataset under identical preprocessing and evaluation protocols.
Each data point includes the current timestamp, current longitude and latitude, current Speed Over Ground (SOG), Course Over Ground (COG), and the destination port name with its corresponding coordinates.

Our dataset contains comprehensive global maritime coverage with the following geographical boundaries:
\begin{itemize}
    \item Latitude range: $36.4^\circ S$ to $55.5^\circ N$
    \item Longitude range: $180.0^\circ W$ to $180.0^\circ E$
\end{itemize}
This extensive spatial coverage includes major shipping routes across the Atlantic, Pacific, and Indian Oceans, as well as significant regional seas, ensuring the model's exposure to diverse maritime navigation patterns. All vessels in this private dataset are ocean-going container ships engaged in long-haul international trade.

\subsection{Data Preprocessing Pipeline}
The data preprocessing consists of two steps:
\paragraph{Step 1 - Temporal Interpolation:} Due to the non-uniform temporal distribution with AIS system limitation, we perform interpolation with a fixed time interval $T = 5$ minutes to achieve uniform sampling.
\paragraph{Step 2 - Spatial Annotation:} The Next Key Point is determined by intersecting each trajectory with predefined maritime key nodes (e.g., ports and straits), and all points between two consecutive intersections are labeled with the corresponding NKP, yielding 103 unique spatial annotations.

After uniform interpolation, the dataset comprises 15,728,640 waypoints and corresponds to 1,310,720 hours of maritime navigation data, providing a substantial foundation for model training. The model received the data with windows sliding within different MMSI codes. The train, validation, and test datasets were split by MMSI code and are free of data leakage. 

\subsection{Stage 2 Database Sampling Strategy}
To sample the database for open-set verification, the dataset is partitioned into multiple voyage segments based on Maritime Mobile Service Identity (MMSI) and Key Points. Within each segment, we apply a sliding-window approach to extract trajectory sequences, with a window size of $L_{seq}$ and a stride of $S$ steps, where $L_{seq}$ denotes the sequence length for model input. In practice, we set it as 288, i.e., 24 hours. 

After the sliding-window operation, we perform stratified sampling by randomly selecting 50 trajectories per key node, achieving a good balance between representation diversity and computational efficiency. Key Points with insufficient samples (fewer than 50 points) are excluded from the final dataset. This methodology yields a balanced dataset comprising 56 ports and 2,800 data entries, ensuring adequate representation across different geographical regions. 

\subsection{Data Description and Coverage for Public Dataset}

We further evaluate our method on a public AIS dataset published on Hugging Face to assess generalization beyond our private data source.
Each AIS record contains the vessel identifier (MMSI), timestamp (BaseDateTime), geolocation (latitude and longitude),
kinematic attributes (speed over ground, course over ground, and heading), vessel type encoded using standard AIS ship-type codes, navigation status, and a track identifier when available.

The version used in our experiments exhibits broad spatial and kinematic coverage.
\begin{itemize}
    \item Latitude range:  $0.05^\circ\ N$ to $61.25^\circ\ N$
    \item Longitude range:  $177.86^\circ\ W$ to $171.22^\circ\ E$
    \item SOG has a mean of $11.98 \pm 4.39$~knots with the range $[1.00, 29.90]$
    \item COG has a mean of $185.73 \pm 101.72$ with the range cover the full angular domain $[0^\circ, 360^\circ)$
\end{itemize}

Overall, the dataset predominantly covers near-equatorial to mid- and high-latitude regions in the Northern Hemisphere and includes trajectories across major ocean basins. It is qualified as a test dataset. 

The public dataset contains multiple vessel categories, including Cargo, Fishing, Passenger, Pleasure Craft/Sailing, Tanker, and Tug/Tow.
To ensure consistency with our private dataset and experimental focus, we restrict evaluation to cargo vessels by selecting AIS messages whose VesselType falls within the cargo ship-type codes (70--79).

According to the repository statistics, the dataset comprises AIS messages from approximately 19k vessels with hundreds of millions of records.
In the snapshot used for evaluation, we observe 19{,}016 unique MMSIs, with individual vessel trajectories spanning from several days to multiple months, supporting both short- and long-horizon prediction scenarios.

To align preprocessing with our private dataset, we apply a unified pipeline:
AIS messages are filtered by vessel type, grouped by MMSI (and TrackID when available), and resampled via linear interpolation to a fixed temporal resolution of $T=5$ minutes.
All features are cast to consistent units and data types to ensure compatibility across datasets. The process details are the same as the private dataset. The private test dataset has 2324 samples, while the public test dataset has 560 samples and comes from distinct sources. 

\section{Implementation Details}
\label{implementation details}
\subsection{Model Implementation}
The hidden size of MiniMind Block was set as 256 with 2 key-value heads. MiniMind Model~1 has 1 hidden layer, while MiniMind Model~2 has 1, and the hidden size of the MLP is 128. 
\subsection{Training Details}
The learning rate is 7e-5. We also utilized AdamW as the optimizer and Cosine Annealing Warm Restarts as the scheduler. Other hyperparameters are default. $t$ in Evaluation was set as 15. 
All training and evaluation are performed on a single NVIDIA RTX 4090. 


\end{document}